\def\math#1{$#1$}
\def\mand#1{$$#1$$}
\def\frac#1#2{{#1\over #2}}
\def\mld#1{\begin{equation}
#1
\end{equation}}
\def\eqar#1{\begin{eqnarray}
#1
\end{eqnarray}}
\def\eqan#1{\begin{eqnarray*}
#1
\end{eqnarray*}}
\def\cl#1{{\cal #1}}
\def\x{{\mathbf x}}
\def\y{{\mathbf y}}
\def\r#1{{(\ref{#1})}}
\def\dotfil{\leaders\hbox to 1.5mm{.}\hfill}
\newcounter{rmnum}
\def\RN#1{\setcounter{rmnum}{#1}\uppercase\expandafter{\romannumeral\value{rmnum}}}
\def\rn#1{\setcounter{rmnum}{#1}\expandafter{\romannumeral\value{rmnum}}}
\newcommand{\ONorm }[1]{\mbox{}\left\|#1\right\|_{\ell_1}  }
\newcommand{\ZNorm }[1]{\mbox{}\left\|#1\right\|_0  }
\newcommand{\FNorm }[1]{\mbox{}\left\|#1\right\|_F  }
\newcommand{\FNormS}[1]{\mbox{}\left\|#1\right\|_F^2}
\newcommand{\TNorm}[1]{\mbox{}\left\|#1\right\|_2}
\newcommand{\TsNorm}[1]{\mbox{}\|#1\|_2}
\newcommand{\TNormS}[1]{\mbox{}\left\|#1\right\|_2^2}
\newcommand{\setlinespacing}[1]%
           {\setlength{\baselineskip}{#1 \defbaselineskip}}
\newcommand{\abs }[1]{\left|#1\right|}
\newtheorem{lemma}{Lemma}
\newtheorem{theorem}{Theorem}
\newenvironment{proof}{\noindent {\em Proof:}}{\hspace*{\fill}\mbox{$\diamond$}}
\newcommand{\mat}[1]{{\ensuremath{\bm{\mathrm{#1}}}}}
\def\x{{\mathbf x}}
\def\matA{\mat{A}}
\def\matI{\mat{I}}
\def\matS{\mat{S}}
\def\matU{\mat{U}}
\def\matV{\mat{V}}
\def\matX{\mat{X}}
\def\matSig{\mat{\Sigma}}
\def\vv{\mathbf{v}}
\def\Exp{\mathbb{E}}
\DeclareMathSymbol{\R}{\mathbin}{AMSb}{"52}
\begin{document}

\title{Approximating Sparse PCA from Incomplete Data}
%
\author{
Abhisek Kundu
\thanks{
Department of Computer Science,
Rensselaer Polytechnic Institute,
Troy, NY,
kundua2@rpi.edu.
}
\and
Petros Drineas
\thanks{
Department of Computer Science,
Rensselaer Polytechnic Institute,
Troy, NY,
drinep@cs.rpi.edu.
}
\and
Malik Magdon-Ismail
\thanks{
Department of Computer Science,
Rensselaer Polytechnic Institute,
Troy, NY,
magdon@cs.rpi.edu.
}
}

\date{}
\maketitle

\begin{abstract}
\noindent 
We study how well one can recover sparse principal components
of a data matrix 
using a sketch formed from a few of its elements. 
We show that for a wide class of optimization problems,
if the sketch is close (in the spectral norm) to the original data
matrix, then one can recover a near optimal solution to the optimization
problem by using the sketch. In particular, we use this approach to
obtain sparse principal components and show that for \math{m} data points
in \math{n} dimensions,
\math{O(\epsilon^{-2}\tilde k\max\{m,n\})} elements gives an
\math{\epsilon}-additive approximation to the sparse PCA problem
(\math{\tilde k} is the stable rank of the data matrix).
We demonstrate our algorithms extensively
on image, text, biological and financial data.
The results show that not only are we able to recover the sparse PCAs from 
the incomplete data, but by using our sparse sketch, the running time
drops by a factor of five or more.
\end{abstract}

\def\norm#1{\|#1\|}
\newcommand{\remove}[1]{}
\def\transp{^T}
\def\trace{\text{trace}}
\def\opt{\textsc{opt}}
\section{Introduction}
\label{section:intro}
Principal components analysis constructs a low dimensional subspace of the
data such that projection of the data onto this subspace 
preserves as much information as possible (or equivalently maximizes 
the variance of the projected data).
The earliest reference to principal components analysis (PCA) is in
\cite{P1901}. Since then, PCA has evolved into a classic tool
for data analysis.
A challenge for the interpretation of the principal 
components (or factors)
 is that they can be linear combinations of \emph{all} the original 
variables.
When the original variables have direct physical 
significance (e.g. genes in biological applications or assets in
financial applications) it is desirable to have factors 
which have loadings on only a small
number of the original variables. 
These interpretable factors are 
\emph{sparse principal components (SPCA)}. 

The question we address is not how to better perform 
sparse PCA; rather, it is whether one can perform sparse PCA on 
\emph{incomplete
data}  and be assured some degree of 
success.
(Read: can one do sparse PCA when you have a 
small sample of data points and those data points have missing features?).
Incomplete data is a situation that one is confronted 
with all too often in machine learning. For example, with 
user-recommendation data, one does not have all the ratings of 
any given user. Or in a privacy preserving setting, a client
may not want to give you all entries in the data matrix.
In such a setting, our goal is to show that if the samples that you 
do get are chosen carefully, the sparse PCA 
features of the data can be recovered within some provable error bounds.
A significant part of this work is
to demonstrate our algorithms on a variety of data sets.

More formally,
The data matrix is \math{\matA\in\R^{m\times n}} (\math{m} data points in 
\math{n} dimensions).
Data matrices often have low effective rank.
Let \math{\matA_k} be the best rank-\math{k} approximation to 
\math{\matA}; in practice, it is often possible to 
choose a small value of \math{k} for which $\TsNorm{\matA-\matA_k}$
is small. The best rank-\math{k} approximation \math{\matA_k}
is obtained by projecting \math{\matA} onto the subspace spanned by its
top-\math{k} principal components \math{\matV_k}, which is
the \math{n\times k} matrix containing the top-\math{k} right singular
vectors of \math{\matA}. These top-\math{k} principal components are
the solution to the variance maximization problem:
\mand{
\matV_k=\mathop{\arg\max}\limits_{\matV\in\R^{n\times k},\matV\transp\matV=\matI}
\trace(\matV\transp\matA\transp\matA\matV).
}
We denote the maximum variance attainable by \math{\opt_k}, which is the
sum of squares of the top-\math{k} singular values
of \math{\matA}.
To get sparse principal components, 
you add a sparsity constraint to the optimization problem: every column of \math{\matV} should have at most 
\math{r} non-zero entries (the sparsity parameter \math{r} is an input),
\mld{
\matS_k=\mathop{\arg\max}\limits_{\matV\in\R^{n\times k},\matV\transp\matV=\matI,
\norm{\matV^{(i)}}_0\le r}
\trace(\matV\transp\matA\transp\matA\matV).\label{eq:spca}
}
The sparse PCA problem is itself a very hard problem that is not only
NP-hard, but also inapproximable~\citep{M2015}
There are many
heuristics for obtaining 
sparse factors~\citep{CJ95,TJU03,ZHT06,AEJL07,ABE08,MWA06a,SH08} 
including some approximation algorithms with provable guarantees
\cite{APD2014}. The existing research typically addresses the task of
getting just the top principal component (\math{k=1}). While the
sparse PCA problem is hard and interesting, it is \emph{not} the focus of this
work.

We address the question: 
What if you do not know \math{\matA}, but only have a sparse sampling of
some of the entries in \math{\matA} (incomplete data)?
The sparse sampling is used to construct
a \emph{sketch} of \math{\matA}, denoted 
\math{\tilde\matA}. There is not much else to do but solve the sparse 
PCA problem with the sketch \math{\tilde\matA} instead of 
the full data \math{\matA} to get \math{\tilde\matS_k},
\mld{
\tilde\matS_k=
\mathop{\arg\max}\limits_{\matV\in\R^{n\times k},\matV\transp\matV=\matI,\norm{\matV^{(i)}}_0\le r}
\trace(\matV\transp\tilde\matA\transp\tilde\matA\matV).
\label{eq:sketchspca}
}
We study how \math{\tilde\matS_k} performs as an approximation
to \math{\matS_k} with respective to the objective that we are trying to 
optimize, namely 
\math{\trace(\matS\transp\matA\transp\matA\matS)} 
--- the quality of approximation is
measured with respect to the true \math{\matA}. 
We show that 
the quality of approximation is controlled by how well 
\math{\tilde\matA\transp\tilde\matA} 
approximates \math{\matA\transp\matA} as measured by the spectral
norm of the deviation
\math{\matA\transp\matA-\tilde\matA\transp\tilde\matA}. 
This is a general result that does not
rely on how one constructs the sketch
\math{\tilde\matA}.
\begin{theorem}[Sparse PCA from a Sketch]\label{theorem:main1}
Let \math{\matS_k} be a solution to the sparse PCA problem that 
solves \r{eq:spca}, and \math{\tilde\matS_k} a solution to the sparse PCA 
problem for the sketch \math{\tilde\matA} which solves 
\r{eq:sketchspca}. Then,
\mand{
\trace(\tilde\matS_k\transp\matA\transp\matA\tilde\matS_k)\ge
\trace(\matS_k\transp\matA\transp\matA\matS_k)-2k\norm{\matA\transp\matA-
\tilde\matA\transp\tilde\matA}_2.
}
\end{theorem}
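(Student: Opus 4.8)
The plan is to transfer approximate optimality through the perturbed objective by controlling, uniformly over the feasible set, how much the two objectives can disagree. Write $\matE = \matA\transp\matA - \tilde\matA\transp\tilde\matA$ for the perturbation, so that $\norm{\matE}_2$ is exactly the quantity appearing on the right-hand side. The first step is a uniform approximation bound: for \emph{any} matrix $\matV$ with orthonormal columns $\v_1,\dots,\v_k$ (that is, $\matV\transp\matV = \matI$),
\[
\abs{\trace(\matV\transp\matA\transp\matA\matV) - \trace(\matV\transp\tilde\matA\transp\tilde\matA\matV)} = \abs{\trace(\matV\transp\matE\matV)} = \abs{\sm_{i=1}^k \v_i\transp\matE\v_i} \le k\norm{\matE}_2,
\]
where the final inequality uses $\abs{\v_i\transp\matE\v_i} \le \norm{\matE}_2\norm{\v_i}_2^2 = \norm{\matE}_2$ for each unit vector. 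Crucially, the sparsity constraint is irrelevant to this estimate; only orthonormality of the columns is used, so the bound holds simultaneously at $\matS_k$ and at $\tilde\matS_k$.

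The second step is the standard sandwich argument, which never compares the matrices $\matS_k$ and $\tilde\matS_k$ directly (they may be very different) but only their objective values. Applying the uniform bound at $\tilde\matS_k$ gives $\trace(\tilde\matS_k\transp\matA\transp\matA\tilde\matS_k) \ge \trace(\tilde\matS_k\transp\tilde\matA\transp\tilde\matA\tilde\matS_k) - k\norm{\matE}_2$. Since $\tilde\matS_k$ maximizes the sketch objective over the feasible set and $\matS_k$ is itself feasible, $\trace(\tilde\matS_k\transp\tilde\matA\transp\tilde\matA\tilde\matS_k) \ge \trace(\matS_k\transp\tilde\matA\transp\tilde\matA\matS_k)$. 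Finally, applying the uniform bound at $\matS_k$ gives $\trace(\matS_k\transp\tilde\matA\transp\tilde\matA\matS_k) \ge \trace(\matS_k\transp\matA\transp\matA\matS_k) - k\norm{\matE}_2$. Chaining these three inequalities yields the claim, with the factor $2k$ arising as $k$ (from the $k$ orthonormal columns) times $2$ (from invoking the uniform bound once at each of the two solutions).

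There is no serious analytic obstacle here; the whole result reduces to the one-line observation that a quadratic form $\v\transp\matE\v$ at a unit vector is bounded by $\norm{\matE}_2$. The only points that demand care are bookkeeping ones: getting the constant $2k$ right by summing over the $k$ columns and by invoking the uniform bound at both endpoints, and recognizing that it is precisely the optimality of $\tilde\matS_k$ for the sketch that lets this unknown solution inherit a guarantee measured against the \emph{true} data matrix $\matA$. It is worth emphasizing that the argument is entirely generic in the constraint set: the same proof bounds the suboptimality of the sketch solution for \emph{any} maximization of $\trace(\matV\transp\matA\transp\matA\matV)$ over orthonormal $\matV$ drawn from a fixed feasible family, which is why the conclusion can be stated independently of how $\tilde\matA$ is constructed.
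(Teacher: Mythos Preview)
Your proof is correct and follows essentially the same approach as the paper: a uniform bound on the objective perturbation over the feasible set, combined with a three-step sandwich through the sketch optimum. The paper packages the sandwich as a general ``surrogate optimization'' lemma for Lipschitz objectives and establishes the constant $k$ via Von Neumann's trace inequality (bounding $|\trace(\matE\matV\matV\transp)|$ by $\sum_{i=1}^k\sigma_i(\matE)$), whereas you obtain the same constant more elementarily by summing $|\v_i\transp\matE\v_i|\le\norm{\matE}_2$ over the $k$ orthonormal columns; both routes give the identical bound.
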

Theorem~\ref{theorem:main1} says that if we can closely approximate \math{\matA}
with \math{\tilde\matA}, then we can compute, from \math{\tilde\matA},
sparse components which capture almost as much variance as the optimal 
sparse components computed from the full data \math{\matA}.

In our setting, the sketch \math{\tilde\matA} is
 computed from 
a sparse sampling of the data elements in \math{\matA} (incomplete data).
To determine which elements to sample, and how to form the sketch,
we leverage some
recent results in elementwise matrix completion~(\cite{KDM15}).
In a nutshell, if one
samples larger data elements with higher probability than
smaller data elements,
then, for the resulting sketch \math{\tilde\matA}, the error 
\math{\norm{\matA\transp\matA-\tilde\matA\transp\tilde\matA}_2} will be small. 
The details of the sampling
scheme and how the error depends on the number of samples is
given in Section~\ref{sec:tool}. Combining the bound on 
\math{\norm{\matA-\tilde\matA}_2} from Theorem~\ref{thm:element_sampling} in  
Section~\ref{sec:tool} with Theorem~\ref{theorem:main1}, we get 
our
main result:
\begin{theorem}[Sampling Complexity for Sparse PCA]\label{theorem:main2}
Sample \math{s} data-elements from \math{\matA\in\R^{m\times n}} to form the 
sparse sketch \math{\tilde\matA} using Algorithm~\ref{alg:proto_alg1}.
Let \math{\matS_k} be a solution to the sparse PCA problem that 
solves \r{eq:spca}, and let \math{\tilde\matS_k}, which solves 
\r{eq:sketchspca}, be a solution to the sparse PCA 
problem for the sketch \math{\tilde\matA} formed from the \math{s}
sampled data elements. Suppose the number of samples \math{s} satisfies 
\mand{
s\ge\frac{2k^2}{\epsilon^2}\left(\rho^2+\frac{\epsilon\gamma}{3k}\right)
\log\left(
\frac{m+n}{\delta}\right)
}
(\math{\rho^2} and \math{\gamma} are dimensionless quantities
that depend only on \math{\matA}).
Then, with probability at least \math{1-\delta}
\mand{
\trace(\tilde\matS_k\transp\matA\transp\matA\tilde\matS_k)\ge
\trace(\matS_k\transp\matA\transp\matA\matS_k)-\epsilon(2+\epsilon/k)
\norm{\matA}_2^2.
}
\end{theorem}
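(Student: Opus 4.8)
The plan is to combine the structural reduction of Theorem~\ref{theorem:main1} with the elementwise sampling guarantee of Theorem~\ref{thm:element_sampling}. Theorem~\ref{theorem:main1} has already done the conceptual work: it collapses the whole question into controlling the single scalar $\norm{\matA\transp\matA-\tilde\matA\transp\tilde\matA}_2$. So the only things left to do are (i) convert a spectral bound on $\norm{\matA-\tilde\matA}_2$ into one on $\norm{\matA\transp\matA-\tilde\matA\transp\tilde\matA}_2$, and (ii) invoke the sampling theorem at a carefully chosen target accuracy so that the stated sample size $s$ is exactly what comes out.

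First I would run the sampling step at accuracy $\epsilon'=\epsilon/(2k)$. Writing $\matE=\matA-\tilde\matA$ and expanding $\tilde\matA\transp\tilde\matA=(\matA-\matE)\transp(\matA-\matE)$ gives the identity $\matA\transp\matA-\tilde\matA\transp\tilde\matA=\matA\transp\matE+\matE\transp\matA-\matE\transp\matE$, whence, by the triangle inequality and submultiplicativity of the spectral norm,
\[
\norm{\matA\transp\matA-\tilde\matA\transp\tilde\matA}_2\le 2\norm{\matA}_2\norm{\matE}_2+\norm{\matE}_2^2.
\]
If the sampling step guarantees $\norm{\matE}_2\le\epsilon'\norm{\matA}_2=\frac{\epsilon}{2k}\norm{\matA}_2$, the right-hand side is at most $\left(\frac{\epsilon}{k}+\frac{\epsilon^2}{4k^2}\right)\norm{\matA}_2^2$.

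Next I would substitute this into Theorem~\ref{theorem:main1}. The penalty term becomes $2k\norm{\matA\transp\matA-\tilde\matA\transp\tilde\matA}_2\le 2k\left(\frac{\epsilon}{k}+\frac{\epsilon^2}{4k^2}\right)\norm{\matA}_2^2=\left(2\epsilon+\frac{\epsilon^2}{2k}\right)\norm{\matA}_2^2\le\epsilon\left(2+\frac{\epsilon}{k}\right)\norm{\matA}_2^2$, which is precisely the claimed bound. It then remains to check that the stated $s$ suffices to force $\norm{\matE}_2\le\frac{\epsilon}{2k}\norm{\matA}_2$ with probability at least $1-\delta$: substituting $\epsilon'=\epsilon/(2k)$ into the matrix-Bernstein-type bound of Theorem~\ref{thm:element_sampling} produces a variance contribution scaling like $\rho^2/\epsilon'^2=4k^2\rho^2/\epsilon^2$ and a max-magnitude contribution scaling like $\gamma/\epsilon'$ with the Bernstein factor $1/3$, and these collect into $\frac{2k^2}{\epsilon^2}\left(\rho^2+\frac{\epsilon\gamma}{3k}\right)\log\frac{m+n}{\delta}$ under the constant conventions fixed in Section~\ref{sec:tool}.

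The main obstacle here is not conceptual depth but bookkeeping with the constants. The factor $2k$ in Theorem~\ref{theorem:main1} forces the intermediate accuracy to be $\epsilon'=\epsilon/(2k)$ rather than $\epsilon$, and one must verify that the quadratic term $\norm{\matE}_2^2$ contributes only the lower-order $\epsilon^2/k$ piece and does not spoil the leading $2\epsilon$ term — which is exactly why the target tolerance is written as $\epsilon(2+\epsilon/k)$ rather than $2\epsilon$. The failure probability $\delta$ and the $\log\frac{m+n}{\delta}$ dependence are inherited verbatim from Theorem~\ref{thm:element_sampling} and need no extra union-bound argument.
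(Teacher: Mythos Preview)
Your approach is exactly the paper's: invoke Theorem~\ref{thm:element_sampling} at a rescaled tolerance, push the resulting bound on $\norm{\matA-\tilde\matA}_2$ through the quadratic expansion \r{eq:quad-bound}, and feed the outcome into Theorem~\ref{theorem:main1}. The only discrepancy is in the bookkeeping of the intermediate tolerance. The paper sets the accuracy in Theorem~\ref{thm:element_sampling} to $\epsilon/k$ (not $\epsilon/(2k)$): substituting $\epsilon\mapsto\epsilon/k$ into $s\ge\frac{2}{\epsilon^2}(\rho^2+\gamma\epsilon/3)\log\frac{m+n}{\delta}$ gives precisely the displayed $\frac{2k^2}{\epsilon^2}(\rho^2+\frac{\epsilon\gamma}{3k})\log\frac{m+n}{\delta}$. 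Your choice $\epsilon'=\epsilon/(2k)$ would instead yield $\frac{8k^2}{\epsilon^2}(\rho^2+\frac{\epsilon\gamma}{6k})\log\frac{m+n}{\delta}$, so the claim that the constants ``collect into'' the stated expression is off by a factor of four in the leading term. With the paper's $\epsilon/k$, the penalty from Theorem~\ref{theorem:main1} comes out to $2k\bigl(\tfrac{2\epsilon}{k}+\tfrac{\epsilon^2}{k^2}\bigr)\norm{\matA}_2^2=2\epsilon(2+\epsilon/k)\norm{\matA}_2^2$, twice what is printed in the theorem statement; that extra factor of two is a slip in the statement rather than in the argument, and your $\epsilon/(2k)$ repairs the error bound at the cost of the sample bound. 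Either way, the proof structure is identical.
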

The dependence of \math{\rho^2} and \math{\gamma} on \math{\matA} are given
in Section~\ref{sec:tool}. Roughly speaking, we can ignore the term with
\math{\gamma} since it is multiplied by \math{\epsilon/k}, and 
\math{\rho^2=O(\tilde k\max\{m,n\})}, where \math{\tilde k} is the
stable (numerical) rank of \math{\matA}.
To paraphrase Theorem~\ref{theorem:main2}, 
when the stable rank is a small constant,
with \math{O(k^2\max\{m,n\})}
samples, one can recover almost as good sparse principal components
as with all data (the price being a small fraction of the optimal
variance, since \math{\opt_k\ge\norm{\matA}_2^2}). 
As far as we know, this is the first result to show 
that it is possible to provably recover sparse PCA from incomplete
data. We also give an application of 
Theorem~\ref{theorem:main1} to running sparse PCA after 
``denoising'' the  data
 using a greedy thresholding algorithm that sets the small elements to 
zero (see Theorem~\ref{theorem:thresholding}). Such denoising 
is appropriate when the observed matrix has been 
element-wise perturbed by small noise, and the uncontaminated data
matrix is sparse and contains
large elements. We show that if an appropriate fraction of the
(noisy) data is set to zero, one can still recover sparse principal
components. 
This gives a principled approach to regularizing sparse
PCA in the presence of small noise when the data is sparse.

Not only do our algorithms preserve the quality of the
sparse principal components,
but iterative algorithms for sparse PCA, whose running time is proportional
to the number of non-zero entries in the input matrix, benefit from the
sparsity of \math{\tilde\matA}. Our experiments show about
five-fold speed gains while producing near-comparable sparse components 
using less than
10\% of the data.

\paragraph{Discussion.}
In summary, we show that one can recover sparse PCA from incomplete
data while gaining computationally at the same time.
Our result holds for the optimal sparse components from
\math{\matA}
versus from 
\math{\tilde\matA}. One cannot efficiently
find
these optimal components (since the problem is NP-hard to even approximate),
so one runs a heuristic, in which case the approximation error of the 
heuristic would have to be taken into account. Our experiments
show that using the incomplete data with the heuristics is 
just as good as those same heuristics with the complete data.

In practice, one may not be able to sample the data, but rather
the samples are given to you. Our result establishes that if the
samples are chosen with larger values being more likely, then
one can recover sparse PCA. In practice one
has no choice but to run the sparse PCA on these sampled elements
and hope. Our theoretical results suggest that the outcome will be
reasonable. This is because, while we 
do not have specific control over what samples we get, the 
samples are likely to represent the larger elements. 
For example, with user-product recommendation data,
users are more likely to rate items they either really like 
(large positive value) or really
dislike (large negative value).

%
\paragraph{Notation.}\label{sec:notation}
We use bold uppercase (e.g., $\matX$) for
matrices and bold lowercase (e.g., $\x$) for column vectors. 
The $i$-th row of $\matX$ is $\matX_{(i)}$, and the $i$-th column of $\matX$ 
is $\matX^{(i)}$. Let $[n]$ denote the set $\{1, 2, ..., n\}$. 
$\mathbb{E}(X)$ is the expectation of a random variable $X$; 
for a matrix, $\mathbb{E}(\matX)$ denotes the element-wise expectation. For a matrix $\matX \in \mathbb{R}^{m \times n}$, the Frobenius norm $\FNorm{\matX}$ is 
$\FNormS{\matX} = \sum_{i,j=1}^{m,n} \matX_{ij}^2$, and the spectral (operator)
 norm $\TNorm{\matX}$ is 
$\TNorm{\matX} = \text{max}_{\TNorm{\y}=1}\TNorm{\matX\y}$. 
We also have the \math{\ell_1} and \math{\ell_0} norms:
$\ONorm{\matX} = \sum_{i,j=1}^{m,n} \abs{\matX_{ij}}$ and $\ZNorm{\matX}$ 
(the number of non-zero entries in $\matX$). The $k$-th largest singular value of $\matX$ is $\sigma_k(\matX)$.
and $\log x$ is the natural logarithm of $x$. 

\section{Sparse PCA from a Sketch}
\label{section:sketch}

In this section, we will prove Theorem~\ref{theorem:main1} and give a 
simple application to zeroing small fluctuations as a way to 
regularize to noise. In the next section we will use a more sophisticated
way to select the elements of the matrix allowing us to tolerate
a sparser matrix (more incomplete data) but still recovering
sparse PCA to reasonable accuracy.

Theorem~\ref{theorem:main1} will be a corollary of
a more general result, for a class of optimization problems involving a 
Lipschitz objective function over an arbitrary (not necessarily convex)
domain. Let \math{f(\matV,\matX)} be a function that is defined for 
a matrix variable \math{\matV} and a matrix parameter \math{\matX}. 
The optimization variable \math{\matV} is in some feasible set
\math{\cl{S}} which is arbitrary. The parameter \math{\matX} is also
arbitrary. We assume that \math{f} is Lipschitz in \math{\matX} with 
Lipschitz constant \math{\gamma}. So,
\mand{
|f(\matV,\matX)-f(\matV,\tilde\matX)|\le\gamma(\matX)\norm{\matX-\tilde\matX}_2
\qquad\forall\matV\in\cl{S}.
}
(Note we allow the Lipschitz constant to depend on \math{\matX} but not
\math{\matV}.)
The next lemma is the key tool we need to prove Theorem~\ref{theorem:main1}
and it may be on independent interest in other optimization settings. We are 
interested in maximizing \math{f(\matV,\matX)} w.r.t. \math{\matV} to obtain
\math{\matV^*}. But, we only
have an approximation \math{\tilde\matX} for  \math{\matX}, and so we maximize
\math{f(\matV,\tilde\matX)} to obtain \math{\tilde\matV^*}, which will be 
a suboptimal solution with respect to \math{\matX}. We wish to bound
\math{f(\matV^*,\matX)-f(\tilde\matV^*,\matX)} which quantifies how 
suboptimal \math{\tilde\matV^*} is w.r.t. \math{\matX}. 
\begin{lemma}[Surrogate optimization bound]\label{lemma:tool1}
Let \math{f(\matV,\matX)} be \math{\gamma}-Lipschitz w.r.t.
\math{\matX} over the domain \math{\matV\in\cl{S}}.
Define
\eqan{
\matV^*=\arg\max_{\matV\in\cl{S}}f(\matV,\matX); \qquad 
\tilde\matV^*=\arg\max_{\matV\in\cl{S}}f(\matV,\tilde\matX).
}
Then, 
$$
f(\matV^*,\matX)-f(\tilde\matV^*,\matX)\le 2\gamma(\matX)\norm{\matX-\tilde\matX}_2.
$$
\end{lemma}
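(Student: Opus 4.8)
The plan is to control the suboptimality gap $f(\matV^*,\matX)-f(\tilde\matV^*,\matX)$ by a telescoping argument that inserts the two ``mixed'' evaluations $f(\matV^*,\tilde\matX)$ and $f(\tilde\matV^*,\tilde\matX)$, in which the optimization variable and the parameter are drawn from the two different problems. First I would write the gap as a sum of three differences:
$$
f(\matV^*,\matX)-f(\tilde\matV^*,\matX)
= \big[f(\matV^*,\matX)-f(\matV^*,\tilde\matX)\big]
+ \big[f(\matV^*,\tilde\matX)-f(\tilde\matV^*,\tilde\matX)\big]
+ \big[f(\tilde\matV^*,\tilde\matX)-f(\tilde\matV^*,\matX)\big].
$$
Each bracketed term is then dispatched by a single elementary fact, so that no calculation beyond the hypotheses of the lemma is needed.

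The first and third brackets each compare $f$ evaluated at a \emph{fixed} feasible point ($\matV^*$ and $\tilde\matV^*$ respectively) but with the two parameters $\matX$ and $\tilde\matX$; since $f$ is $\gamma$-Lipschitz in its second argument uniformly over $\matV\in\cl{S}$, each of these is bounded in absolute value by $\gamma(\matX)\norm{\matX-\tilde\matX}_2$. The middle bracket is where optimality enters: because $\tilde\matV^*$ maximizes $f(\cdot,\tilde\matX)$ over $\cl{S}$ and $\matV^*\in\cl{S}$ is feasible for that same problem, we have $f(\matV^*,\tilde\matX)\le f(\tilde\matV^*,\tilde\matX)$, so this term is $\le 0$ and drops out. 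Summing the three contributions gives the claimed bound $2\gamma(\matX)\norm{\matX-\tilde\matX}_2$.

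The argument is short and I do not expect a genuine obstacle; the only real ``idea'' is the choice of which two mixed terms to insert, so that the Lipschitz hypothesis applies cleanly to the outer two differences while the defining property of $\tilde\matV^*$ as a maximizer of the surrogate objective kills the inner one. It is worth emphasizing that nothing is assumed about the geometry of $\cl{S}$ (it need not be convex or even connected) nor about $f$ beyond Lipschitzness in $\matX$, so the same three-term split applies verbatim to any surrogate optimization problem of this form. Theorem~\ref{theorem:main1} then follows by specializing to $f(\matV,\matX)=\trace(\matV\transp\matX\matV)$ with $\matX=\matA\transp\matA$ and $\tilde\matX=\tilde\matA\transp\tilde\matA$: for $\matV$ with $k$ orthonormal columns one checks $|\trace(\matV\transp(\matX-\tilde\matX)\matV)|\le k\norm{\matX-\tilde\matX}_2$, identifying the Lipschitz constant $\gamma=k$ and producing the factor $2k$.
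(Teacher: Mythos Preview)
Your proposal is correct and is essentially the same argument as the paper's, with a slightly cleaner organization. The paper first isolates an auxiliary fact (Lemma~\ref{lemma:basic1}: $\sup f-\sup g\le\sup(f-g)$), then splits the gap into the two terms $[f(\matV^*,\matX)-f(\tilde\matV^*,\tilde\matX)]+[f(\tilde\matV^*,\tilde\matX)-f(\tilde\matV^*,\matX)]$, applies the auxiliary lemma to the first bracket, and finishes with Lipschitz on both. Your three-term telescoping unpacks that auxiliary lemma inline: your first two brackets together are exactly what Lemma~\ref{lemma:basic1} accomplishes (the middle bracket being $\le 0$ by optimality of $\tilde\matV^*$ is precisely the content of $\sup f-\sup g\le\sup(f-g)$ specialized here). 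So the two routes coincide step for step; yours just avoids stating the sup inequality separately, which makes it a touch more self-contained.
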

In the lemma, the function \math{f} and the domain \math{\cl{S}} are arbitrary.
In our setting, \math{\matX\in\R^{n\times n}},
the domain \math{\cl{S}=\{\matV\in\R^{n\times k};\matV\transp\matV=\matI_k;
\norm{\matV^{(j)}}_0\le r\}}, and 
\math{f(\matV,\matX)=\trace(\matV\transp\matX\matV)}.
We first show that 
\math{f} is Lipschitz w.r.t. \math{\matX} with \math{\gamma=k} (a constant
independent of \math{\matX}). Let the representation of
\math{\matV} by its columns be
\math{\matV=[\vv_1,\ldots,\vv_k]}.
Then,
\eqan{
|\trace(\matV\transp\matX\matV)-\trace(\matV\transp\tilde\matX\matV)| 
=
|\trace((\matX-\tilde\matX)\matV\matV\transp)|
%
\le \sum_{i=1}^k\sigma_i(\matX-\tilde\matX)
\le k\norm{\matX-\tilde\matX}_2
}
where, $\sigma_i(\matA)$ is the $i$-th largest singular value of $\matA$ 
(we used Von-neumann's trace inequality and the fact that
\math{\matV\matV\transp} is a \math{k}-dimensional projection). 
Now, by Lemma~\ref{lemma:tool1},
\mand{
\trace({\matV^*}\transp\matX\matV^*)-
\trace(\tilde\matV^{*T}\matX\tilde\matV^*)
\le 2k\norm{\matX-\tilde\matX}_2.
}
Theorem~\ref{theorem:main1} follows by setting 
\math{\matX=\matA\transp\matA} and 
\math{\tilde\matX=\tilde\matA\transp\tilde\matA}.

\paragraph{Greedy thresholding.}
We give the simplest scenario of incomplete data where 
Theorem~\ref{theorem:main1} gives some reassurance that one can compute 
good
sparse principal components. Suppose the smallest data elements
have been set to zero. This can happen, for example, if only the
largest elements are measured, or in a noisy setting if the
small elements are treated as noise and set to zero. So 
\eqan{
\tilde\matA_{ij}=
\begin{cases}
\matA_{ij}&|\matA_{ij}|\ge\delta;\\
0&|\matA_{ij}|<\delta.
\end{cases}
}
Recall \math{\tilde k=\norm{\matA}^2_F/\norm{\matA}^2_2} (stable rank of 
\math{\matA}), and define 
\math{\norm{\matA_\delta}^2_F=\sum_{|\matA_{ij}|<\delta}\matA_{ij}^2}.
Let \math{\matA=\tilde\matA+\Delta}. By construction,
\math{\norm{\Delta}_F^2=\norm{\matA_\delta}^2_F}. Then,
\begin{eqnarray}\label{eq:quad-bound}
\norm{\matA\transp\matA-\tilde\matA\transp\tilde\matA}_2
=
\norm{\matA\transp\Delta+\Delta\transp\matA-\Delta\transp\Delta}_2
\le
2\norm{\matA}_2\norm{\Delta}_2+\norm{\Delta}_2^2.
\end{eqnarray}
Suppose the zeroing of elements only loses a fraction of 
the energy in \math{\matA}, i.e. \math{\delta} is selected
so that \math{\norm{\matA_\delta}^2_F\le\epsilon^2\norm{\matA}^2_F/\tilde k};
that is an \math{\epsilon/\tilde k} fraction of the 
total variance in \math{\matA} has been lost in the 
unmeasured (or zero) data. Then 
\mand{
\norm{\Delta}_2
\le\norm{\Delta}_F
\le\frac{\epsilon}{\sqrt{\tilde k}}\norm{\matA}_F
=\epsilon\norm{\matA}_2.}
\begin{theorem}\label{theorem:thresholding}
Suppose that \math{\tilde\matA} is created from \math{\matA} by zeroing
all elements that are less than \math{\delta}, and 
\math{\delta} is such that the truncated norm satisfies
\math{\norm{\matA_\delta}_2^2\le\epsilon^2\norm{\matA}_F^2/\tilde k}. Then
the sparse PCA solution \math{\tilde\matV^*} satisfies
\mand{
\trace(\tilde\matV^{*T}\matA\matA\tilde\matV^*)\ge
\trace({\matV^*}\transp\matA\matA\transp\matV^*)
-2k\epsilon\norm{\matA}_2^2(2+\epsilon).
}
\end{theorem}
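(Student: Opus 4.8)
The plan is to derive the statement as a direct corollary of Theorem~\ref{theorem:main1}, taking $\tilde\matA$ to be the thresholded matrix and letting $\matV^*,\tilde\matV^*$ be the optimal sparse components for $\matA$ and $\tilde\matA$ respectively (so they coincide with $\matS_k,\tilde\matS_k$ in the feasible set of orthonormal $\matV$ with column sparsity at most $r$). The only quantity that has to be controlled is the spectral deviation $\norm{\matA\transp\matA-\tilde\matA\transp\tilde\matA}_2$, and the displays preceding the statement already assemble the two ingredients I need.

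First I would invoke the quadratic bound \r{eq:quad-bound}: writing $\matA=\tilde\matA+\Delta$, expanding $\matA\transp\matA-\tilde\matA\transp\tilde\matA=\matA\transp\Delta+\Delta\transp\matA-\Delta\transp\Delta$, and applying the triangle inequality together with sub-multiplicativity of the spectral norm gives
$$
\norm{\matA\transp\matA-\tilde\matA\transp\tilde\matA}_2\le 2\norm{\matA}_2\norm{\Delta}_2+\norm{\Delta}_2^2 .
$$
Next I would convert the Frobenius-energy hypothesis on the discarded entries into a spectral bound on $\Delta$. By construction $\norm{\Delta}_F^2=\norm{\matA_\delta}_F^2$, so $\norm{\Delta}_2\le\norm{\Delta}_F$, and the hypothesis $\norm{\matA_\delta}_F^2\le\epsilon^2\norm{\matA}_F^2/\tilde k$ combined with the stable-rank identity $\tilde k=\norm{\matA}_F^2/\norm{\matA}_2^2$ (equivalently $\norm{\matA}_F=\sqrt{\tilde k}\,\norm{\matA}_2$) yields
$$
\norm{\Delta}_2\le\frac{\epsilon}{\sqrt{\tilde k}}\norm{\matA}_F=\epsilon\norm{\matA}_2 .
$$

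Substituting this into the quadratic bound gives $\norm{\matA\transp\matA-\tilde\matA\transp\tilde\matA}_2\le 2\epsilon\norm{\matA}_2^2+\epsilon^2\norm{\matA}_2^2=\epsilon(2+\epsilon)\norm{\matA}_2^2$. Feeding this spectral bound into the conclusion of Theorem~\ref{theorem:main1}, which costs a factor of $2k$, produces
$$
\trace(\tilde\matV^{*T}\matA\transp\matA\tilde\matV^*)\ge\trace(\matV^{*T}\matA\transp\matA\matV^*)-2k\,\epsilon(2+\epsilon)\norm{\matA}_2^2 ,
$$
which is exactly the claimed inequality.

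I do not expect a genuine obstacle here: every step is a routine chaining of inequalities already established in the excerpt, and the theorem is essentially a worked example of Theorem~\ref{theorem:main1}. The only points I would treat with care are the bookkeeping in the stable-rank identity that makes $\norm{\matA}_F$ and $\norm{\matA}_2$ interchangeable up to the factor $\sqrt{\tilde k}$, and verifying that the optimizations defining $\matV^*$ and $\tilde\matV^*$ are over the \emph{same} feasible set so that Theorem~\ref{theorem:main1} applies verbatim; both are immediate from the setup.
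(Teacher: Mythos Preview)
Your proposal is correct and follows the paper's own argument essentially verbatim: the paper also derives the theorem by combining the quadratic spectral bound \r{eq:quad-bound} with the Frobenius-to-spectral estimate $\norm{\Delta}_2\le\norm{\Delta}_F\le(\epsilon/\sqrt{\tilde k})\norm{\matA}_F=\epsilon\norm{\matA}_2$, and then plugging the resulting bound $\epsilon(2+\epsilon)\norm{\matA}_2^2$ into Theorem~\ref{theorem:main1}. There is nothing to add.
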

Theorem~\ref{theorem:thresholding} shows that it is possible to 
recover sparse PCA after setting small elements to zero.
This is appropriate when most of 
the elements in \math{\matA} are small noise and a few of the elements
in \math{A} contain large data elements. For example if your 
data consists of sparse \math{O(\sqrt{nm})} large elements 
(of magnitude, say, 1) and
many \math{nm-O(\sqrt{nm})} small elements whose magnitude is
\math{o(1/\sqrt{nm})} (high signal-to-noise setting), then
\math{\norm{\matA_\delta}_2^2/ \norm{\matA}_2^2\rightarrow0} and with 
just a sparse sampling of the \math{O(\sqrt{nm})}
large elements (very incomplete data), one recovers
near optimal sparse PCA.

Greedily keeping only the large elements of the matrix 
requires a particular structure in \math{\matA} to work,
and it is based on a crude Frobenius-norm bound for the spectral
error. In Section~\ref{sec:tool}, we 
use recent results in element-wise matrix sparsification
to
choose the elements in a randomized
way, with a bias toward large elements.
With high probability, one can directly bound the 
spectral error and hence get better performance. But first, let us
prove Lemma~\ref{lemma:tool1}

\paragraph{A Proof of Lemma~\ref{lemma:tool1}.}
We need the following 
lemma.
\begin{lemma}\label{lemma:basic1}
Let \math{f} and \math{g}
be functions on a domain
\math{\cl{S}}. Then,
\mand{
\sup_{x\in\cl{S}}f(x)-\sup_{y\in\cl{S}}g(y)
\le
\sup_{x\in\cl{S}}(f(x)-g(x)).
}
\end{lemma}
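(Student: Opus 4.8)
The plan is to prove the inequality by a pointwise argument followed by taking suprema, which sidesteps any question of whether the suprema are attained. First I would fix an arbitrary point $x\in\cl{S}$ and split $f(x)$ additively as $f(x)=\big(f(x)-g(x)\big)+g(x)$. Each piece is bounded by a supremum over $\cl{S}$: the difference obeys $f(x)-g(x)\le\sup_{z\in\cl{S}}\big(f(z)-g(z)\big)$ since $x$ is itself feasible for that supremum, and trivially $g(x)\le\sup_{y\in\cl{S}}g(y)$ for the same reason.

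Combining these two bounds yields, for every $x\in\cl{S}$,
\mand{
f(x)\le\sup_{z\in\cl{S}}\big(f(z)-g(z)\big)+\sup_{y\in\cl{S}}g(y).
}
The right-hand side no longer depends on $x$, so it is an upper bound for $f$ uniformly over $\cl{S}$; taking the supremum over $x$ on the left therefore preserves the inequality. Finally I would subtract $\sup_{y\in\cl{S}}g(y)$ from both sides to recover exactly the claimed bound
\mand{
\sup_{x\in\cl{S}}f(x)-\sup_{y\in\cl{S}}g(y)\le\sup_{x\in\cl{S}}\big(f(x)-g(x)\big).
}

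There is essentially no hard step here; the single point deserving a word of care is the final rearrangement, which tacitly assumes $\sup_{y\in\cl{S}}g(y)$ is finite so that it may legitimately be moved across the inequality. In the intended application $f$ and $g$ are the continuous trace objectives appearing in Lemma~\ref{lemma:tool1}, restricted to the feasible set of orthonormal, row-sparse $\matV$, which is compact; hence both suprema are finite (indeed attained) and the subtraction is harmless. If one wanted the statement in full generality over the extended reals, I would instead retain the additive form above and merely adopt the convention that the indeterminate case $\infty-\infty$ is excluded, but for the present purposes the bounded setting is all that is required.
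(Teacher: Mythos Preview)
Your proof is correct and follows essentially the same pointwise argument as the paper: both bound $f(x)$ (or equivalently $f(x)-\sup_y g(y)$) by $\sup_z(f(z)-g(z))$ via $g(x)\le\sup_y g(y)$, then take the supremum over $x$. Your added remark about the finiteness of $\sup_y g(y)$ is a reasonable caveat that the paper does not make explicit, but in the intended application it is indeed harmless.
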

\begin{proof} 
$$
\sup_{x\in\cl{S}}(f(x)-g(x))
\ge
{f(x)-g(x)}\ge
{f(x)-\sup_{y\in\cl{S}}g(y)},
\quad \forall x\in\cl{S}.
$$
Since the RHS holds for all \math{x}, it follows that
\math{\sup_{x\in\cl{S}}(f(x)-g(x))} is an upper bound for 
\math{f(x)-\sup_{y\in\cl{S}}g(\matU)}, and hence
\eqan{
\sup_{x\in\cl{S}}(f(x)-g(x))
&\ge&
\sup_{x\in\cl{S}}\left(f(x)-\sup_{y\in\cl{S}}g(y)\right)\\
&=&
\sup_{x\in\cl{S}}f(x)-\sup_{y\in\cl{S}}g(y).
}
\end{proof}

\begin{proof}(Lemma~\ref{lemma:tool1})
Suppose that \math{\max_{\matV\in\cl{S}}f(\matV,\matX)} is attained at 
\math{\matV^*} and 
\math{\max_{\matV\in\cl{S}}f(\matV,\tilde\matX)} is attained at 
\math{\tilde\matV^*}, and define
\math{\epsilon=f(\matV^*,\matX)-f(\tilde\matV^*,\matX)}.
We have that
\eqan{
\epsilon
&=&
f(\matV^*,\matX)-f(\tilde\matV^*,\tilde\matX)
+f(\tilde\matV^*,\tilde\matX)-f(\tilde\matV^*,\matX)\\
&=&
\max_\matV f(\matV,\matX)-\max_\matU f(\matU,\tilde\matX)
+f(\tilde\matV^*,\tilde\matX)-f(\tilde\matV^*,\matX)\\
&\le&
\max_\matV \left(f(\matV,\matX)-f(\matV,\tilde\matX)\right)
+f(\tilde\matV^*,\tilde\matX)-f(\tilde\matV^*,\matX),\\
}
where the last step follows from Lemma~\ref{lemma:basic1}. Therefore,
\eqan{
|\epsilon|
&\le&
\max_\matV \left|f(\matV,\matX)-f(\matV,\tilde\matX)\right|
+|f(\tilde\matV^*,\tilde\matX)-f(\tilde\matV^*,\matX)|\\
&\le&
\max_\matV \gamma(\matX)\norm{\matX-\tilde\matX}_2+
\gamma(\matX)\norm{\matX-\tilde\matX}_2\\
&=&
2\gamma(\matX)\norm{\matX-\tilde\matX}_2.}
(We used the
Lipschitz condition in the second step.)
\end{proof}

\subsection{An $(\ell_1,\ell_2)$-Sampling Based Sketch} \label{sec:tool}

In the previous section, we created the sketch 
by deterministically
setting the small data elements to zero. Instead, we could 
randomly select the data elements
to keep. It is natural to bias this random sampling toward the larger
elements. Therefore, we define sampling probabilities for each data
element \math{\matA_{ij}} which are 
proportional to a mixture of the absolute value and square of the data element:
\begin{eqnarray}\label{eqn:hybrid_prob}
p_{ij} = \alpha  \frac{|\matA_{ij}|}{\ONorm{\matA}} + (1-\alpha)\frac{\matA_{ij}^2}{\FNormS{\matA}},
\end{eqnarray}
where \math{\alpha \in (0,1]} is a mixing parameter.
Such a sampling probability was used in \cite{KDM15} to sample
data elements in independent trials to get a sketch \math{\tilde\matA}.
We repeat the 
prototypical algorithm for element-wise matrix sampling
in Algorithm \ref{alg:proto_alg1}. 
\begin{algorithm}[t]
\caption{Hybrid $(\ell_1,\ell_2)$-Element Sampling\label{alg:proto_alg1}}
\textbf{Input:} $\matA\in \R^{m \times n}$; \#\,samples $s$; probabilities \math{\{p_{ij}\}}.

\begin{algorithmic}[1]
\STATE Set $\tilde{\matA} = \textbf{0}_{m \times n}$.
\FOR{$t = 1\ldots s$ (i.i.d. trials with replacement)}
\STATE Randomly sample indices $(i_t, j_t) \in [m] \times [n]$ with
%
{\math{\mathbb{P}\left[ (i_t, j_t) =  (i,j)\right]=p_{ij}.}}
\STATE Update \math{\tilde\matA}: \math{\displaystyle\tilde{\matA}_{ij}\gets
\tilde{\matA}_{ij} + \frac{\matA_{ij}}{s\cdot p_{ij}}.}
\ENDFOR
\RETURN $\tilde{\matA}$ (with at most \math{s} non-zero entries).
\end{algorithmic}
\end{algorithm} 

\noindent
Note that unlike with the deterministic zeroing of small elements, in this
sampling scheme, one samples the element \math{\matA_{ij}} with
probability \math{p_{ij}} and then \emph{rescales} it by \math{1/p_{ij}}.
To see the intuition for this rescaling, consider the expected
outcome for a single sample:
\mand{
\Exp[\tilde\matA_{ij}]=p_{ij}\cdot(\matA_{ij}/p_{ij})+(1-p_{ij})\cdot 0=\matA_{ij};
}
that is, \math{\tilde\matA} is a sparse but \emph{unbiased} estimate
for \math{\matA}.
This unbiasedness
holds for any choice of the sampling probabilities
$p_{ij}$ defined over the elements of $\matA$ in Algorithm \ref{alg:proto_alg1}.
However, for an appropriate choice of the 
sampling probabilities, we get much more than unbiasedness;
we can control the spectral norm of the deviation, 
\math{\norm{\matA-\tilde\matA}_2}.
In particular, 
the hybrid-$(\ell_1,\ell_2)$ distribution in \r{eqn:hybrid_prob} 
was analyzed in \cite{KDM15}, where they suggest an optimal 
choice for the mixing parameter \math{\alpha^*} which 
minimizes the theoretical bound on \math{\norm{\matA-\tilde\matA}_2}.
This algorithm to choose \math{\alpha^*} is summarized in Algorithm~\ref{alg:alp*}.

Using the probabilities in (\ref{eqn:hybrid_prob}) to create the 
sketch \math{\tilde\matA} using Algorithm~\ref{alg:proto_alg1},
with \math{\alpha^*} selected using Algorithm~\ref{alg:alp*}, one can prove 
a bound for \math{\norm{\matA-\tilde\matA}_2}. We state a simplified version
of the bound from \cite{KDM15}
in Theorem \ref{thm:element_sampling}.
\begin{theorem}[\cite{KDM15}]\label{thm:element_sampling}
Let $\matA \in \mathbb{R}^{m \times n}$ and  let $\epsilon > 0$ be 
an accuracy parameter. Define probabilities
$p_{ij}$ as in (\ref{eqn:hybrid_prob}) with \math{\alpha^*} chosen using 
Algorithm~\ref{alg:alp*}.
Let $\tilde{\matA}$ be the sparse sketch produced using
Algorithm~\ref{alg:proto_alg1} with a number of samples 
\mand{
s \ge \frac{2}{\epsilon^2}\left( \rho^2 + \gamma\epsilon/3\right) \log\left(\frac{m+n}{\delta}\right),
}
where 
\mand{
\rho^2=
\frac{\tilde{k}\cdot \max\{m,n\}}{{\alpha \cdot  \tilde{k}}\cdot \frac{\TNorm{\matA}}{\ONorm{\matA}}+ (1-\alpha)}, \quad\text{and}\quad  
\gamma \leq 1 + \frac{\sqrt{mn\tilde{k}}}{\alpha}.
}
Then, with probability at least $1 - \delta$,
$$\TsNorm{\matA - \tilde{\matA}} \leq \epsilon \TNorm{\matA}.$$
\end{theorem}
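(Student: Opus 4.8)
The plan is to write $\tilde\matA-\matA$ as a sum of $s$ i.i.d.\ zero-mean random matrices and invoke the (rectangular) Matrix Bernstein inequality. Encode the $t$-th trial of Algorithm~\ref{alg:proto_alg1} as the random matrix $\matX_t=\frac{\matA_{i_tj_t}}{s\,p_{i_tj_t}}\matE_{i_tj_t}$, where $\matE_{ij}$ is the standard basis matrix with a single $1$ in position $(i,j)$; then $\tilde\matA=\sum_{t=1}^s\matX_t$. Since $\Exp[\matX_t]=\sum_{i,j}p_{ij}\frac{\matA_{ij}}{s\,p_{ij}}\matE_{ij}=\frac1s\matA$, the centered matrices $\matY_t=\matX_t-\frac1s\matA$ are i.i.d., have mean zero, and satisfy $\tilde\matA-\matA=\sum_{t=1}^s\matY_t$. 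Matrix Bernstein controls $\TNorm{\sum_t\matY_t}$ in terms of a uniform bound $R\ge\TNorm{\matY_t}$ and a variance proxy $\sigma^2=\max\{\TNorm{\sum_t\Exp[\matY_t\matY_t\transp]},\TNorm{\sum_t\Exp[\matY_t\transp\matY_t]}\}$.

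For $R$, I would use only the $\ell_1$ half of the mixture: $p_{ij}\ge\alpha|\matA_{ij}|/\ONorm{\matA}$ gives $|\matA_{ij}|/p_{ij}\le\ONorm{\matA}/\alpha$, hence $\TNorm{\matX_t}\le\ONorm{\matA}/(s\alpha)$. With $\TNorm{\frac1s\matA}=\TNorm{\matA}/s$ and the Cauchy--Schwarz estimate $\ONorm{\matA}\le\sqrt{mn}\FNorm{\matA}=\sqrt{mn\tilde k}\,\TNorm{\matA}$, the triangle inequality yields $R\le\frac{\TNorm{\matA}}{s}\bigl(1+\sqrt{mn\tilde k}/\alpha\bigr)=\gamma\TNorm{\matA}/s$, which is the stated $\gamma$. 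For the variance, note $\matE_{ij}\matE_{ij}\transp=\matE_{ii}$, so $\Exp[\matX_t\matX_t\transp]$ is the $m\times m$ diagonal matrix whose $(i,i)$ entry is $\frac1{s^2}\sum_j\matA_{ij}^2/p_{ij}$; centering only subtracts the PSD matrix $\frac1{s^2}\matA\matA\transp$, so $0\preceq\Exp[\matY_t\matY_t\transp]\preceq\Exp[\matX_t\matX_t\transp]$ and $\TNorm{\sum_t\Exp[\matY_t\matY_t\transp]}\le\frac1s\max_i\sum_j\matA_{ij}^2/p_{ij}$. The transpose version gives the symmetric bound $\frac1s\max_j\sum_i\matA_{ij}^2/p_{ij}$.

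The crux is the per-element inequality $\matA_{ij}^2/p_{ij}\le\rho^2\TNorm{\matA}^2/\max\{m,n\}$; summing any $\max\{m,n\}$ of these then gives $\sigma^2\le\rho^2\TNorm{\matA}^2/s$. Using $\tilde k\TNorm{\matA}^2=\FNormS{\matA}$ to unwind $\rho^2$, this inequality is equivalent to $p_{ij}\ge\alpha\matA_{ij}^2/(\TNorm{\matA}\ONorm{\matA})+(1-\alpha)\matA_{ij}^2/\FNormS{\matA}$. The second term already appears verbatim in $p_{ij}$, and the first reduces, after cancelling $\alpha/\ONorm{\matA}$, to the elementary fact $|\matA_{ij}|\le\TNorm{\matA}$ (the spectral norm dominates every entry). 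This single observation is what ties the hybrid sampling distribution to the form of $\rho^2$, and is the main thing to verify.

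With $R=\gamma\TNorm{\matA}/s$ and $\sigma^2=\rho^2\TNorm{\matA}^2/s$ in hand, I would finish by applying
$$\Probab{\TNorm{\tilde\matA-\matA}\ge\tau}\le(m+n)\exp\!\left(\frac{-\tau^2/2}{\sigma^2+R\tau/3}\right)$$
with $\tau=\epsilon\TNorm{\matA}$. The exponent simplifies to $-\epsilon^2 s/\bigl(2(\rho^2+\gamma\epsilon/3)\bigr)$, so forcing the right-hand side below $\delta$ yields precisely $s\ge\frac{2}{\epsilon^2}(\rho^2+\gamma\epsilon/3)\log\frac{m+n}{\delta}$. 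Beyond the per-element variance bound, the only other care needed is the standard reduction from the symmetric to the rectangular Bernstein inequality via the Hermitian dilation $\begin{bmatrix}0&\matY_t\\\matY_t\transp&0\end{bmatrix}$, which is routine.
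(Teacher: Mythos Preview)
Your argument is correct and is precisely the Matrix Bernstein route underlying the cited result: the paper itself does not reproduce a proof but simply defers to \cite{KDM15}, and the (commented-out) derivation in the source uses exactly your two key estimates, namely $|\matA_{ij}|\le\TNorm{\matA}$ to control $\xi_{ij}=\matA_{ij}^2/p_{ij}$ and Cauchy--Schwarz $\ONorm{\matA}\le\sqrt{mn}\FNorm{\matA}$ to bound $\gamma$. One minor tightening worth noting: since the row sum $\sum_j\matA_{ij}^2/p_{ij}$ has only $n\le\max\{m,n\}$ terms (and symmetrically for columns), your per-element bound $\matA_{ij}^2/p_{ij}\le\rho^2\TNorm{\matA}^2/\max\{m,n\}$ indeed yields $\sigma^2\le\rho^2\TNorm{\matA}^2/s$ as claimed, though ``summing any $\max\{m,n\}$ of these'' slightly overstates the count.
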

\remove{
\begin{proof}
\begin{eqnarray*}
\frac{\alpha \cdot  \FNormS{\matA}}{\abs{\matA_{ij}}\cdot \ONorm{\matA}}+ (1-\alpha) 
&\geq&
 \frac{\alpha \cdot  \FNormS{\matA}}{\TNorm{\matA}\cdot \ONorm{\matA}}+ (1-\alpha)\\
&=& {\alpha \cdot \tilde{k}}\cdot \frac{\TNorm{\matA}}{\ONorm{\matA}}+ (1-\alpha)
\end{eqnarray*}
$$
\frac{\xi_{ij}}{\TNormS{\matA}} \leq \frac{\tilde{k}}{{\alpha \cdot  \tilde{k}}\cdot \frac{\TNorm{\matA}}{\ONorm{\matA}}+ (1-\alpha)}
$$
$$
\frac{\rho^2}{\TNormS{\matA}} \leq \frac{\tilde{k}\cdot \max\{m,n\}}{{\alpha \cdot  \tilde{k}}\cdot \frac{\TNorm{\matA}}{\ONorm{\matA}}+ (1-\alpha)} 
$$
$$
\frac{\gamma}{\TNorm{\matA}} \leq 1+ \frac{\ONorm{\matA}}{\alpha \cdot \TNorm{\matA}}
\leq 1 + \frac{\sqrt{mn\tilde{k}}}{\alpha}
$$
For $\alpha=1$,
$$
\frac{\rho^2}{\TNormS{\matA}} \leq \max\{m,n\}\frac{\ONorm{\matA}}{\TNorm{\matA}}
$$
\\
\end{proof}
}
\begin{proof} Follows from the bound in \cite{KDM15}.
\end{proof}

Recall that \math{\tilde k} is the stable rank of \math{\matA}. 
In practice, \math{\alpha^*} is bounded away from 0 and 1, and so 
 \math{s=O(\epsilon^{-2}\tilde k\max\{m,n\})} samples suffices to get 
a sketch \math{\tilde\matA} for which 
\math{\norm{\matA-\tilde\matA}_2\le\epsilon\norm{\matA}}.
This is exactly what we need to prove Theorem~\ref{theorem:main2}.

\paragraph{Proof of Theorem~\ref{theorem:main2}.}
The number of samples \math{s} in Theorem~\ref{theorem:main2} corresponds
to the number of samples needed in Theorem~\ref{thm:element_sampling}
with the error tolerance \math{\epsilon/k}.
Using \r{eq:quad-bound} (where \math{\Delta=\matA-\tilde\matA}) and
Theorem~\ref{thm:element_sampling}, we have that
\eqar{
\norm{\matA\transp\matA-\tilde\matA\transp\tilde\matA}_2
&\le&
\frac{2\epsilon}{k}\norm{\matA}_2^2+\frac{\epsilon^2}{k^2}\norm{\matA}_2^2.
\label{eq:proof:main2-1}
}
Using \r{eq:proof:main2-1} in Theorem~\ref{theorem:main1} gives
Theorem~\ref{theorem:main2}.

\begin{algorithm}[t]
\caption{Optimal Mixing Parameter \math{\alpha^*}\label{alg:alp*}}
\textbf{Input:} $\matA\in \R^{m \times n}$.
\begin{algorithmic}[1]
\STATE Define two functions of \math{\alpha} that depend on \math{\matA}:
\eqan{
\rho^2(\alpha)= 
\max\left\{\max_i \sum_{j=1}^{n}\xi_{ij},\max_j \sum_{i=1}^{m}\xi_{ij}\right\} - \sigma_{min}^2(\matA);\\
\gamma(\alpha)=
\max_{\stackrel{i,j:}{\matA_{ij}\neq 0}}\left\{\frac{\ONorm{\matA}}{\alpha + (1-\alpha)\frac{\ONorm{\matA}\cdot \abs{\matA_{ij}}}{\FNormS{\matA}}}\right\} + \TNorm{\matA};
}
where,
$$
\xi_{ij} = {\FNormS{\matA}}/\left (\frac{\alpha \cdot  \FNormS{\matA}}{\abs{\matA_{ij}}\cdot \ONorm{\matA}}+ (1-\alpha)\right ),
\text{ for } \matA_{ij}\neq 0.
$$
\STATE Find \math{\alpha^*\in (0,1]} to minimize
\math{\displaystyle
\rho^2(\alpha) + \gamma(\alpha)\epsilon\TNorm{\matA}/3.}
\RETURN $\alpha^*$
\end{algorithmic}
\end{algorithm} 

\section{Experiments}\label{experiments}

We show the experimental performance of sparse PCA from a sketch
using several real data matrices. As we mentioned, sparse PCA
is NP-Hard, and so we must use heuristics. These heuristics are discussed
next, followed by the data, the experimental design and finaly the results.

\subsection{Algorithms for Sparse PCA}

Let $\mathcal{G}$ (ground truth) denote the algorithm which computes
the principal components (which may not be sparse) of the full
data
matrix \math{\matA}; the optimal variance is \math{\opt_k}. 
We consider six heuristics for getting sparce principal components.

\begin{tabular}{p{0.035\textwidth}p{\textwidth}}
$\mathcal{G}_{\text{max}, r}$ &\qquad The $r$ largest-magnitude entries in each 
principal component generated by \math{\mathcal{G}}.\\
$\mathcal{G}_{\text{sp}, r}$ & \qquad 
\math{r}-sparse components using the \textit{Spasm} toolbox of \cite{KCLE12}
with \math{\matA}.\\
$\mathcal{H}_{\text{max}, r}$& \qquad
The $r$ largest entries of the 
principal components for the \math{(\ell_1,\ell_2)}-sampled
sketch \math{\tilde\matA}.\\
$\mathcal{H}_{\text{sp}, r}$& \qquad
\math{r}-sparse components using \textit{Spasm} with 
the \math{(\ell_1,\ell_2)}-sampled
sketch \math{\tilde\matA}.\\
$\mathcal{U}_{\text{max}, r}$& \qquad
The $r$ largest entries of the 
principal components for the \emph{uniformly} sampled
sketch \math{\tilde\matA}.\\
$\mathcal{U}_{\text{sp}, r}$& \qquad
\math{r}-sparse components using \textit{Spasm} with 
the uniformly sampled
sketch \math{\tilde\matA}.\\
\end{tabular}
\remove{
Table \ref{fig:sparse_PC} summarizes the key properties of these algorithms.

\begin{table}[t]
\begin{center}
\scalebox{1}{
    \begin{tabular}{| c || c | c | c | c | c | c |}
   \hline
& $\mathcal{G}_{\text{max}, r}$& $\mathcal{G}_{\text{sp}, r}$ & $\mathcal{H}_{\text{max}, r}$ 
& $\mathcal{H}_{\text{sp}, r}$  & $\mathcal{U}_{\text{max}, r}$ & $\mathcal{U}_{\text{sp}, r}$  \\ \hline \hline
Algo & naive & spasm & naive & spasm & naive & spasm \\ \hline
Data & full & full & sparse & sparse & sparse & sparse \\ \hline
$p_{ij}$ & - & - & $(\ell_1,\ell_2)$ & $(\ell_1,\ell_2)$ & unif & unif \\ \hline
   \end{tabular}
}
\caption
{
Summary of various SPCA algorithms.
}
\label{fig:sparse_PC}
\end{center}
\end{table}
}
\\
The outputs of an algorithm \math{\cl{Z}} are 
sparse principal components $\matV$, and  the metric we are interested
in is the variance, 
$f(\cl{Z})= \trace(\matV^T\matA^T\matA\matV)$, where $\matA$ is the original 
centered data. We consider the following statistics.\\

\begin{tabular}{l| p{13cm}}
$\displaystyle
\frac{f(\mathcal{G}_{\text{max}, r})}{f(\mathcal{G}_{\text{sp},r})}$&  
Relative loss of greedy thresholding 
versus \emph{Spasm}, illustrating the value of a good
sparse PCA algorithm.
Our sketch based algorithms \emph{do not} 
address this loss. \\
&\\ 
$\displaystyle
\frac{f(\mathcal{H}_{\text{max/sp}, r})}{f(\mathcal{G}_{\text{max/sp},r})}$&  
Relative loss of using the \math{(\ell_1,\ell_2)}-sketch 
\math{\tilde\matA} instead of complete data \math{\matA}. A ratio close
to 1 is desired.\\
&\\
$\displaystyle
\frac{f(\mathcal{U}_{\text{max/sp}, r})}{f(\mathcal{G}_{\text{max/sp},r})}$&  
Relative loss of using the uniform sketch 
\math{\tilde\matA} instead of complete data \math{\matA}.
A benchmark to highlight the value of a good sketch.\\
&\\
\end{tabular} 

We also report on the computation time 
for the algorithms. We show results to confirm that
sparse PCA algorithms using the 
\math{(\ell_1,\ell_2)}-sketch are nearly comparable to those
same algorithms on the complete data; and, computing from a 
sparse sketch has a running time that is reduced
 proportionately to the sparsity.

\subsection{Data Sets}

We show results on 
image, text, stock, and gene expression data. 
We  briefly describe the datasets below.\\

\textbf{Digit Data ($m = 2313,\ n=256$):}
We use the \cite{Hull94} handwritten
zip-code
digit images (300 pixels/inch in 8-bit gray scale).
Each pixel is a feature (normalized to be in \math{[-1,1]}). 
Each $16 \times 16$ digit image forms a row of the
data matrix \math{\matA}.
We focus on three digits: 
``6'' (664 samples), ``9'' (644 samples), and ``1'' (1005 samples). \\

\textbf{TechTC Data ($m =139,\ n=15170$):}
We use the Technion Repository of Text Categorization Dataset (TechTC, 
see~\cite{GM04}) from the Open Directory Project (ODP).
Each
documents is represented as a probability distribution over
a bag-of-words, with words being the 
features -- we removed words with fewer than 5 letters. 
Each of the 139 documents forms a row in the data.\\

\textbf{Stock Data ($m =7056,\ n=1218$):}
We use S\&P100 stock market data of prices for 
1218 stocks collected between 1983 and 2011. 
This temporal dataset has 7056 snapshots of stock prices. 
The prices of each day form a row of the data matrix and a principal component
represents an ``index'' of sorts -- each stock is a feature.\\

\textbf{Gene Expression Data ($m =107,\ n=22215$):}
We use GSE10072 gene expression data for lung cancer from the 
NCBI Gene Expression Omnibus database. There are 107 samples 
(58 lung tumor cases and 49 normal lung controls) forming the rows
of the data matrix, with 
22,215 probes (features) from the GPL96 platform annotation table. 

\subsection{Results}\label{experimental_results}

We report results for primarily
the top principal component (\math{k=1}) which is the
case most considered in the literature. When \math{k>1}, 
our results do not qualitatively change.

\paragraph{Handwritten Digits.}\label{sec:digit}
Using Algorithm~\ref{alg:alp*}, the optimal mixing parameter is
$\alpha^* = 0.42$. We sample approximately 
$7\%$ of the elements from the centered data using  
 \math{(\ell_1,\ell_2)}-sampling,
as well as uniform sampling. 
The performance for small of \math{r} is shown in
Table~\ref{fig:digit_obj_ratio}, including the running time \math{\tau}.

\begin{table}[!h]
\begin{center}
\begin{tabular}{c|cc|cc}
$r$ & 
$\displaystyle\frac{f(\mathcal{H}_{\text{max/\bf sp}, r})}{f(\mathcal{G}_{\text{max/\bf sp}, r})}$&
\math{\displaystyle\frac{\tau(\cl{G})}{\tau(\cl{H})}}&
$\displaystyle\frac{f(\mathcal{U}_{\text{max/\bf sp}, r})}{f(\mathcal{G}_{\text{max/\bf sp}, r})}$&   
\math{\displaystyle\frac{\tau(\cl{G})}{\tau(\cl{U})}}\\[2pt]
&&&&\\[-9pt]\hline &&&&\\[-9pt]
20& 1.01/\bf 0.89&6.03&1.13/\bf 0.56&4.7\\
40& 0.99/\bf 0.90&6.21&1.01/\bf 0.70&5.33\\
60& 0.99/\bf 0.98&5.96&0.97/\bf 0.80&5.33\\
80& 0.99/\bf 0.95&6.03&0.94/\bf 0.81&5.18\\
100& 0.99/\bf 0.98&6.22&0.95/\bf 0.87&5.08\\
   \end{tabular}
\caption{
[Digits] Comparison of sparse principal components from the 
\math{(\ell_1,\ell_2)}-sketch and uniform sketch.
}
\label{fig:digit_obj_ratio}
\end{center}
\end{table}

For this data, 
\math{f(\mathcal{G}_{\text{max}, r})/f(\mathcal{G}_{\text{sp}, r})\approx 0.23} ($r=10$),
so it is important to use a good sparse PCA algorithm.
We see from Table~\ref{fig:digit_obj_ratio} that 
the \math{(\ell_1,\ell_2)}-sketch significantly outperforms the
uniform sketch. A more extensive comparison of recovered variance is given in
Figure~\ref{fig:ErrorAll}(a). We also observe a speed-up of a factor of about
6 for the \math{(\ell_1,\ell_2)}-sketch.
We point out that the uniform sketch is reasonable for the digits data 
because most data elements are close to either  $+1$ or $-1$, since the
pixels are either black or white. 
\begin{figure}[!h]
\centering
\begin{subfigure}[b]{0.22\textwidth}
\includegraphics[height=3.6cm,width=3.8cm]{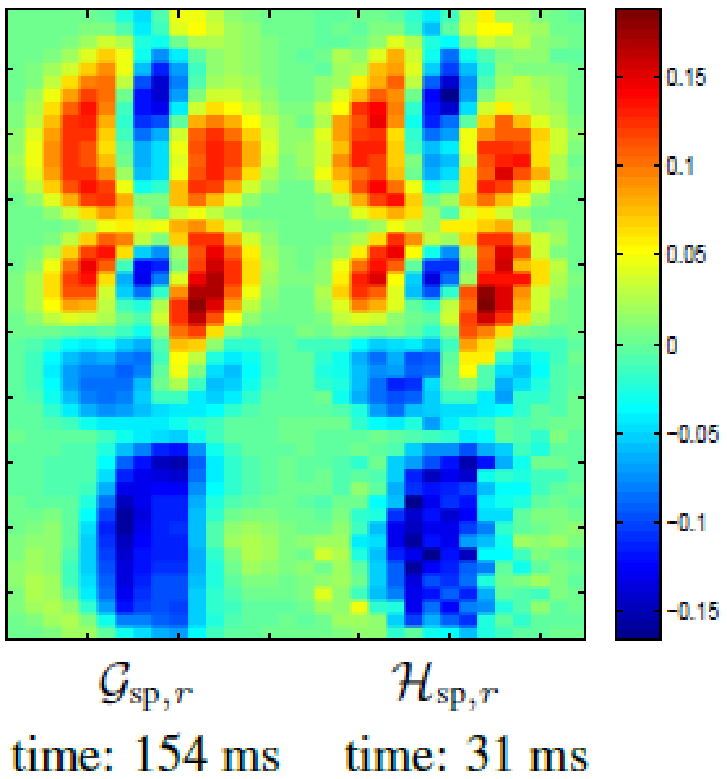}
\caption{$r=100\%$}
\end{subfigure}
\quad 
\begin{subfigure}[b]{0.22\textwidth}
\includegraphics[height=3.6cm,width=3.8cm]{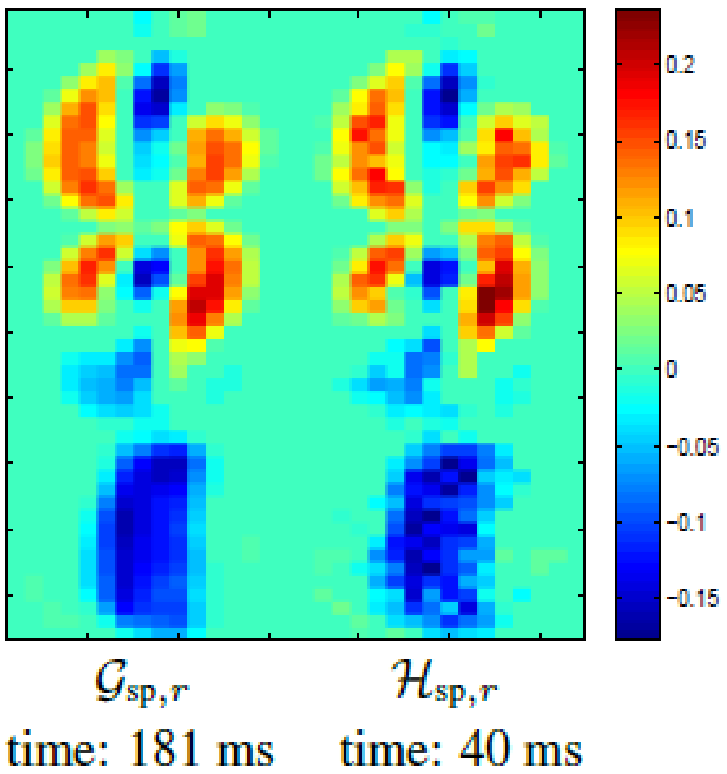}
\caption{$r=50\%$}
\end{subfigure}
\quad 
\begin{subfigure}[b]{0.22\textwidth}
\includegraphics[height=3.6cm,width=3.8cm]{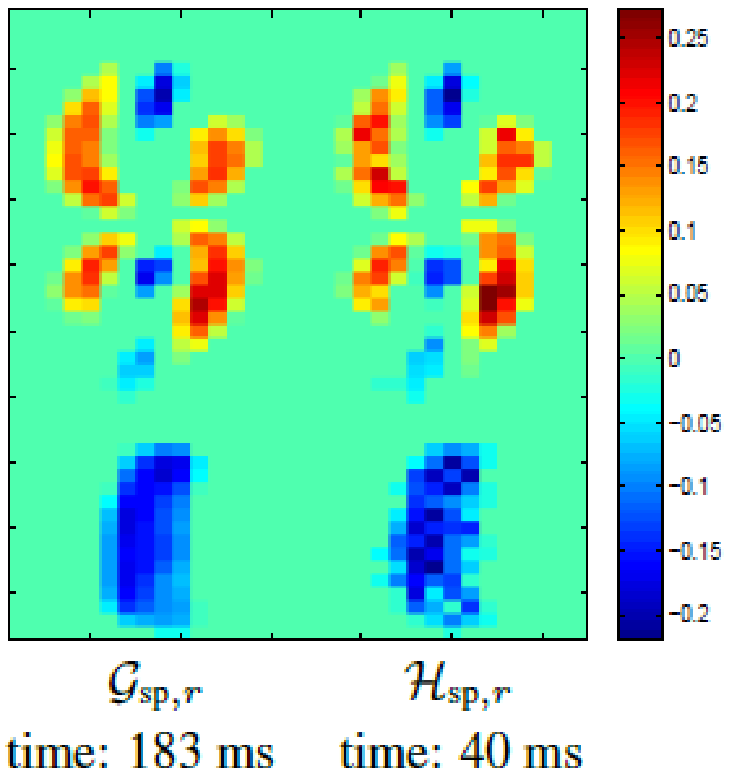}
\caption{$r=30\%$}
\end{subfigure}
\quad 
\begin{subfigure}[b]{0.22\textwidth}
\includegraphics[height=3.6cm,width=3.8cm]{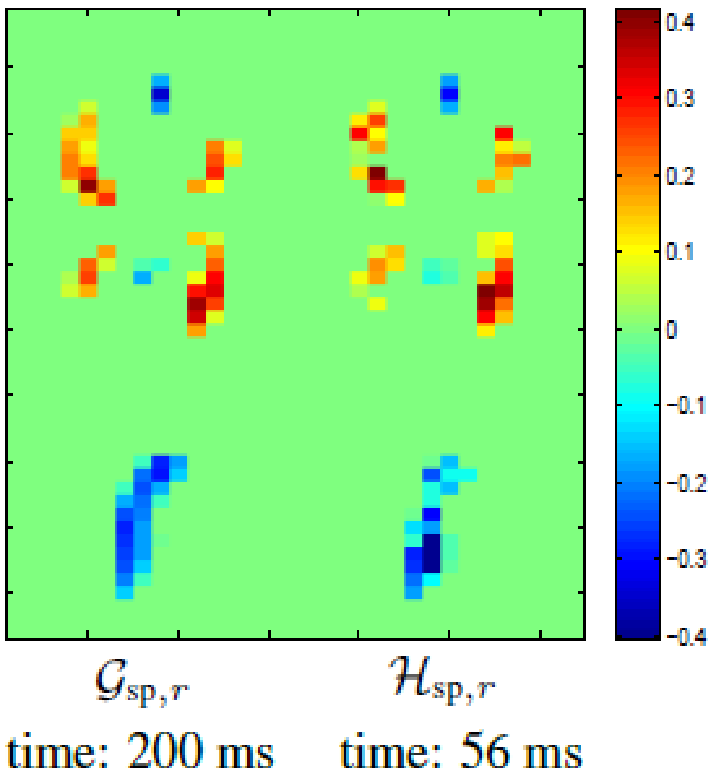}
\caption{$r=10\%$}
\end{subfigure}
\caption{[Digits] Visualization of top-3 sparse principal components.
In each figure, left panel shows $\mathcal{G}_{\text{sp},r}$ and right panel shows $\mathcal{H}_{\text{sp},r}$. 
\label{fig:digit_PC_viz}}
\end{figure}

\begin{figure*}[t]
\begin{center}
\begin{tabular}{cccc}
\resizebox{0.24\textwidth}{0.2\textwidth}{\includegraphics{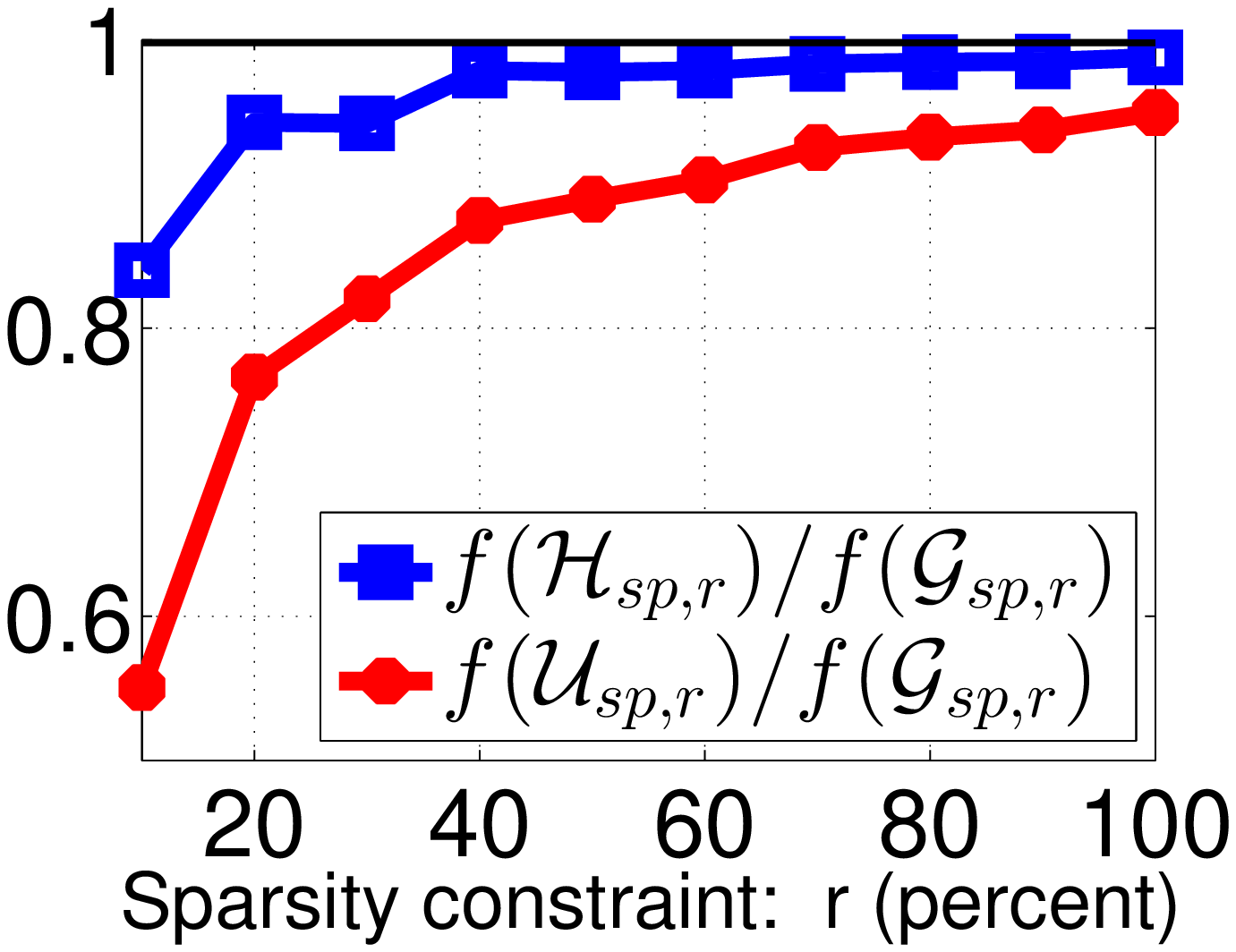}}&
\resizebox{0.23\textwidth}{0.2\textwidth}{\includegraphics{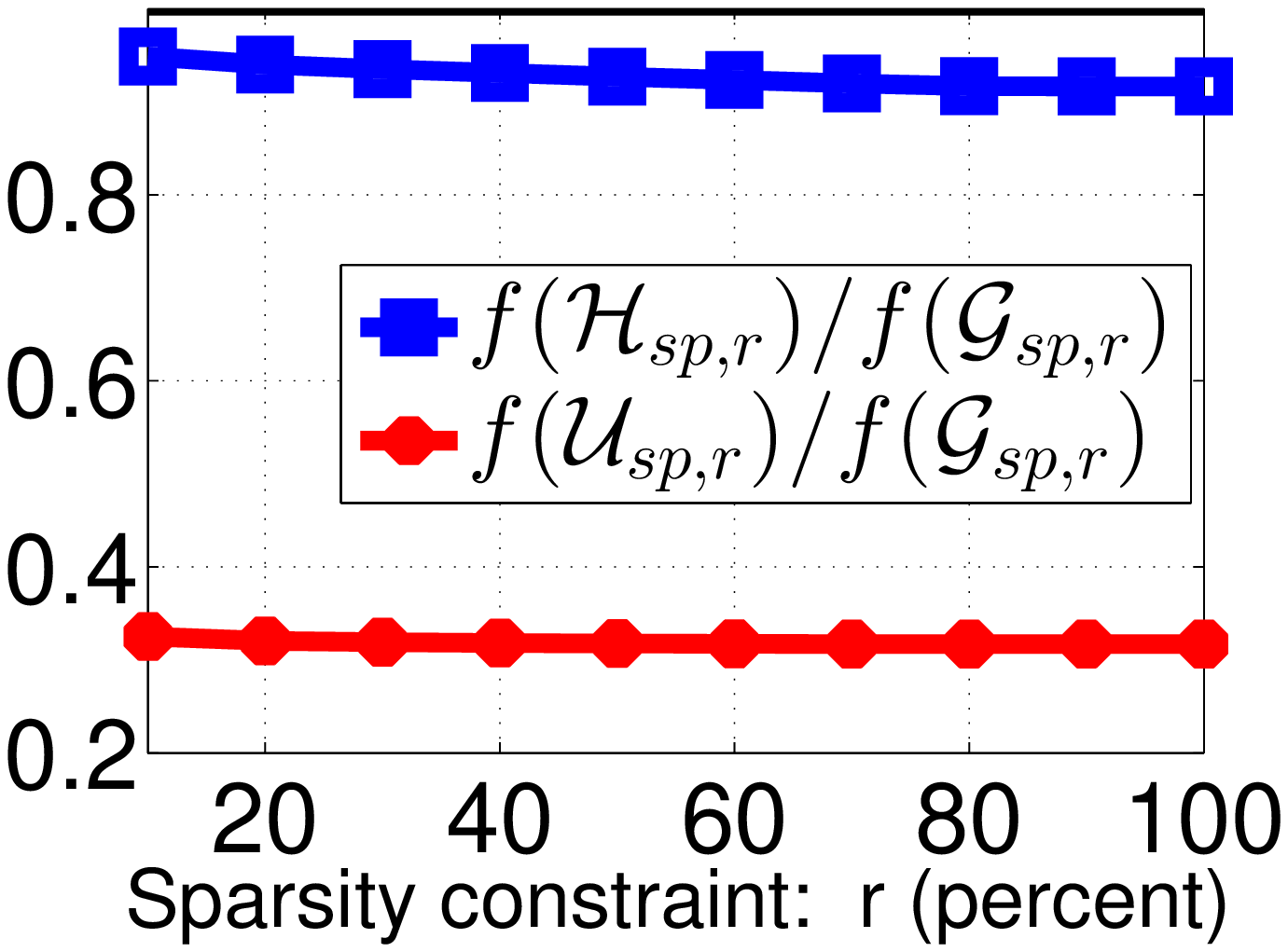}}&
\resizebox{0.23\textwidth}{0.2\textwidth}{\includegraphics{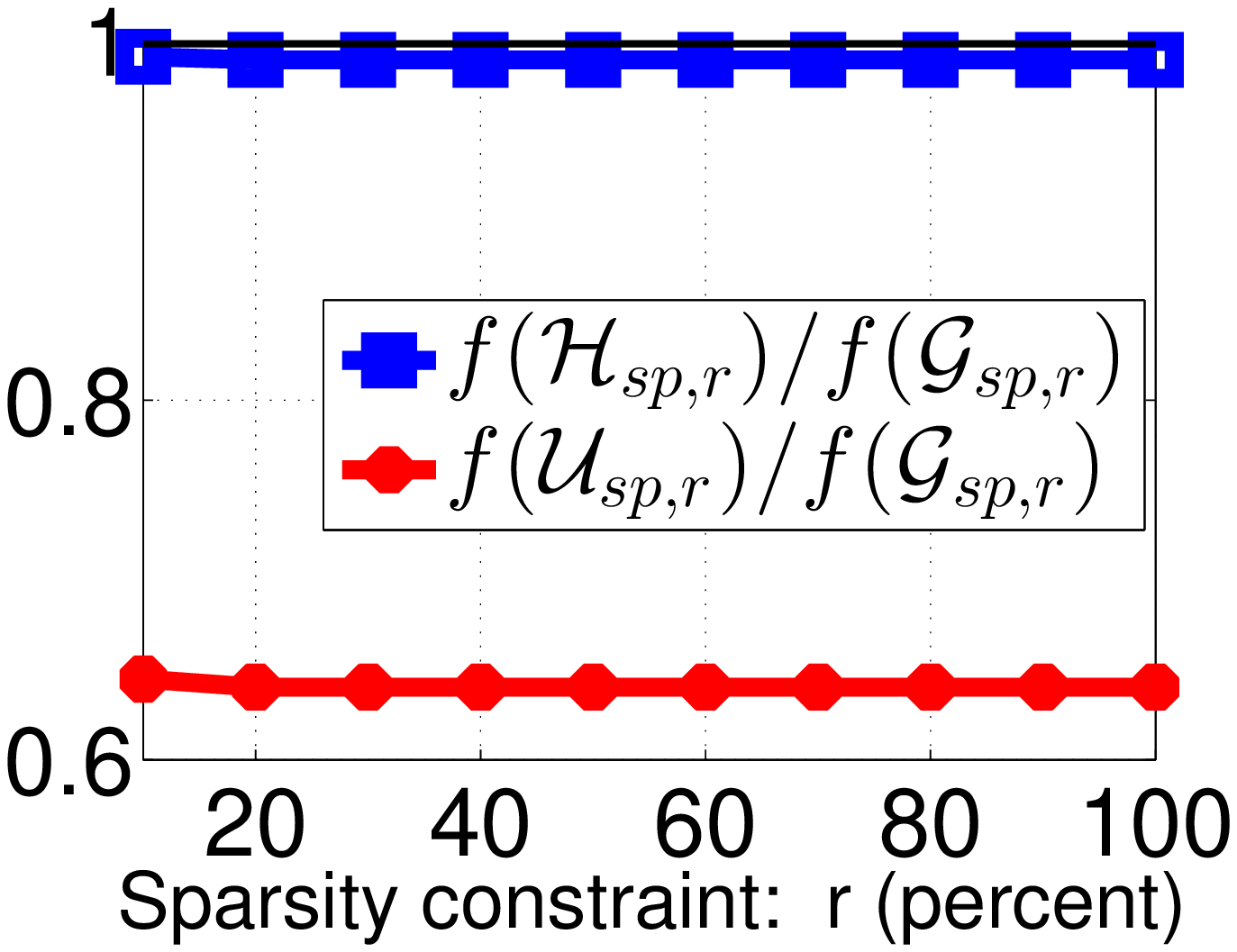}}
&
\resizebox{0.23\textwidth}{0.2\textwidth}{\includegraphics{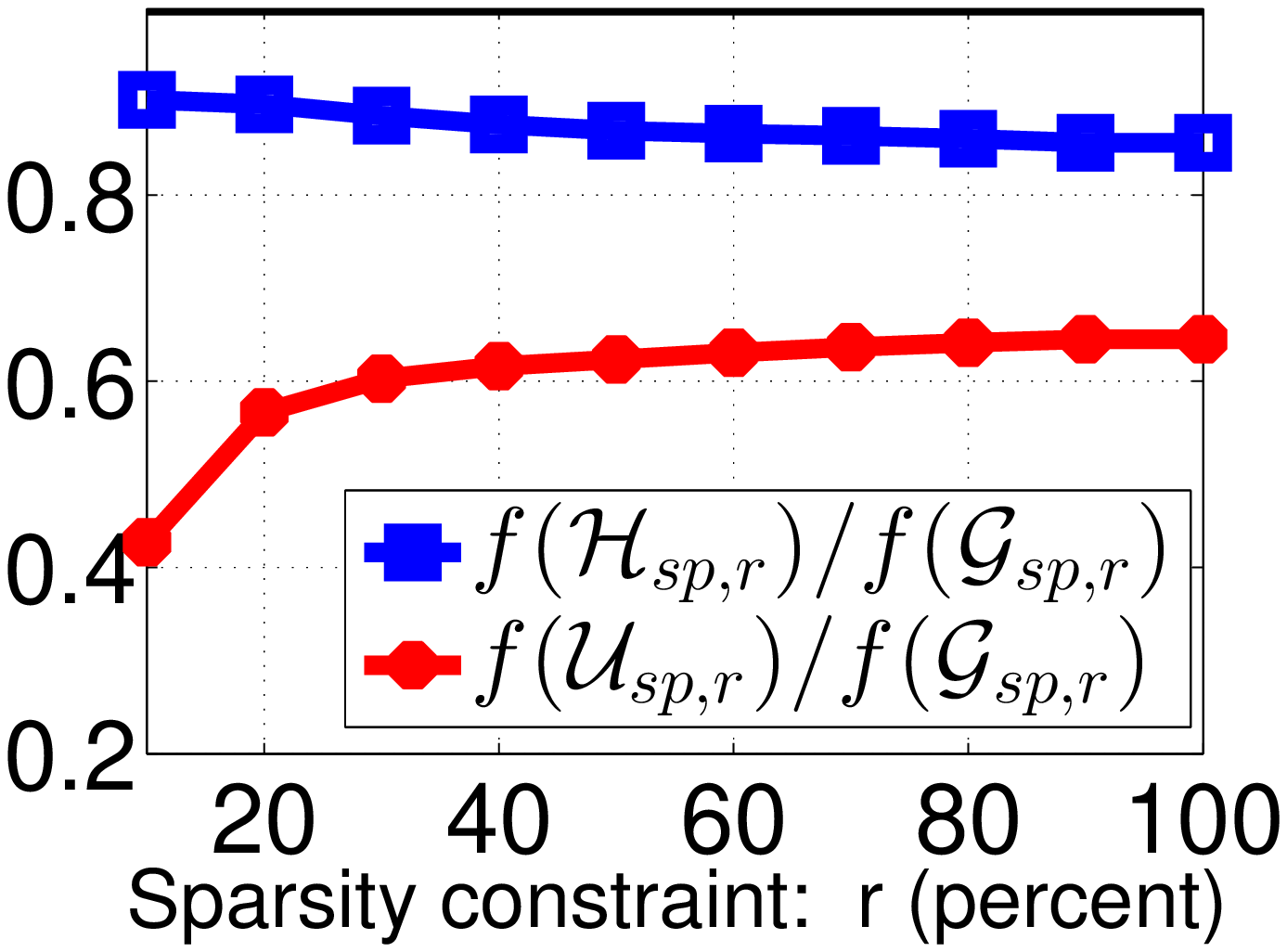}}
\\
(a) Digit & (b) TechTC & (c) Stock & (d) Gene 
\end{tabular}
\end{center}
\caption{Performance of sparse PCA for \math{(\ell_1,\ell_2)}-sketch and
uniform sketch over an extensive range for the sparsity constraint \math{r}.
The performance of the uniform sketch is significantly worse
highlighting the importance of a good sketch.
\label{fig:ErrorAll}}
\end{figure*}

We show a visualization of the
principal components in Figure \ref{fig:digit_PC_viz}.
We observe that the sparse components from the \math{(\ell_1,\ell_2)}-sketch
are almost identical to the sparse components from the complete data.

\paragraph{TechTC Data.}
Algorithm~\ref{alg:alp*} gives optimal mixing parameter
$\alpha^* = 1$. We sample approximately 
$5\%$ of the elements from the centered data using  
our \math{(\ell_1,\ell_2)}-sampling,
as well as uniform sampling. 
The performance for small \math{r} is shown in
Table~\ref{fig:techtc_obj_ratio}, including the running time \math{\tau}.

\begin{table}[!h]
\begin{center}
\begin{tabular}{c|cc|cc}
$r$ & 
$\displaystyle\frac{f(\mathcal{H}_{\text{max/\bf sp}, r})}{f(\mathcal{G}_{\text{max/\bf sp}, r})}$&
\math{\displaystyle\frac{\tau(\cl{G})}{\tau(\cl{H})}}&
$\displaystyle\frac{f(\mathcal{U}_{\text{max/\bf sp}, r})}{f(\mathcal{G}_{\text{max/\bf sp}, r})}$&   
\math{\displaystyle\frac{\tau(\cl{G})}{\tau(\cl{U})}}\\[2pt]
&&&&\\[-9pt]\hline &&&&\\[-9pt]
20& 0.94/\bf 0.98&5.43&0.43/\bf 0.38&5.64\\
40& 0.94/\bf 0.99&5.70&0.41/\bf 0.38&5.96\\
60& 0.94/\bf 0.99&5.82&0.40/\bf 0.37&5.54\\
80& 0.93/\bf 0.99&5.55&0.39/\bf 0.37&5.24\\
100& 0.93/\bf 0.99&5.70&0.38/\bf 0.37&5.52\\
   \end{tabular}
\caption{
[TechTC]  Comparison of sparse principal components from the 
\math{(\ell_1,\ell_2)}-sketch and uniform sketch.
}
\label{fig:techtc_obj_ratio}
\end{center}
\end{table}

For this data, 
\math{f(\mathcal{G}_{\text{max}, r})/f(\mathcal{G}_{\text{sp}, r})\approx 0.84} ($r=10$).
We observe a very significant performance
 difference between the \math{(\ell_1,\ell_2)}-sketch and uniform sketch.
A more extensive comparison of recovered variance is given in
Figure~\ref{fig:ErrorAll}(b). 
We also observe a speed-up of a factor of about
6 for the \math{(\ell_1,\ell_2)}-sketch.
Unlike the digits data which is uniformly near \math{\pm1}, the text data
is ``spikey'' and now it is important to sample with a bias 
toward larger elements, which is why
the uniform-sketch performs very poorly.

As a final comparison, we look at the actual sparse top component with 
sparsity parameter \math{r=10}. The topic IDs in the TechTC data are
10567=''\textbf{US: Indiana: Evansville}''
and
11346=''\textbf{US: Florida}''.
The top-10 features (words) in the full PCA on the complete data are shown in
Table \ref{fig:techtc_words}.

\begin{table}[!h]
\begin{center}
\scalebox{1}{
    \begin{tabular}{| l | l || l | l |}
   \hline
\bf ID &\bf  Top 10 in $\mathcal{G}_{\text{max}, r}$& ID & Other words\\ \hline
\bf 1 \bf &\bf  evansville & 11 & service \\ \hline
\bf 2 &\bf  florida & 12 & small \\ \hline
\bf 3 &\bf  south & 13 & frame \\ \hline
\bf 4 &\bf  miami & 14 & tours \\ \hline
\bf 5 &\bf  indiana & 15 & faver \\ \hline
\bf 6 &\bf  information & 16 & transaction \\ \hline
\bf 7 &\bf  beach & 17 & needs \\ \hline
\bf 8 &\bf  lauderdale & 18 & commercial \\ \hline
\bf 9 &\bf  estate & 19 & bullet\\ \hline
\bf 10 &\bf  spacer & 20 & inlets \\ \hline
& & 21 & producer \\ \hline
   \end{tabular}
}
\caption
{
[TechTC] Top ten words in top principal component of the complete data
(the other words are discovered by some of the
sparse PCA algorithms).
}
\label{fig:techtc_words}
\end{center}
\end{table}

\begin{table}[!h]
\begin{center}
\scalebox{1}{
    \begin{tabular}{| c | c | c | c | c | c |}
   \hline
$\mathcal{G}_{\text{max}, r}$
& $\mathcal{H}_{\text{max}, r}$ 
& $\mathcal{U}_{\text{max}, r}$ 
& $\mathcal{G}_{\text{sp}, r}$ 
& $\mathcal{H}_{\text{sp}, r}$  
& $\mathcal{U}_{\text{sp}, r}$  \\ 
\hline \hline
1 & 1 & 6 & 1 & 1 & 6 \\ \hline
2 & 2 & 14 & 2 & 2 & 14 \\ \hline
3 & 3 & 15 & 3 & 3 & 15 \\ \hline
4 & 4 & 16 & 4 & 4 & 16 \\ \hline
5 & 5 & 17 & 5 & 5 & 17 \\ \hline
6 & 7 & 7 & 6 & 7 & 7 \\ \hline
7 & 6 & 18 & 7 & 8 & 18 \\ \hline
8 & 8 & 19 & 8 & 6 & 19 \\ \hline
9 & 11 & 20 & 9 & 12 & 20 \\ \hline
10 & 12 & 21 & 13 & 11 & 21 \\ \hline
   \end{tabular}
}
\caption
{
[TechTC] Relative ordering of the words (w.r.t. $\mathcal{G}_{\text{max}, r}$) in the top sparse principal component with sparsity parameter
 $r=10$.
}
\label{fig:techtc_words_ranked}
\end{center}
\end{table}

In Table~\ref{fig:techtc_words_ranked} we show which words appear in the 
top sparse principal component with sparsity \math{r=10} using various
sparse PCA algorithms.
We observe that the sparse PCA from the \math{(\ell_1,\ell_2)}-sketch 
with only 5\% of the data sampled 
matches quite closely with the same sparse PCA algorithm using the complete
data ($\mathcal{G}_{\text{max/sp}, r}$ matches $\mathcal{H}_{\text{max/sp}, r}$). 

\paragraph{Stock Data.}
Algorithm~\ref{alg:alp*} gives optimal mixing parameter
$\alpha^* = 0.1$\footnote{we computed $\alpha^*$ numerically in the range $[0.1, 1]$.}.
We sample about 
$2\%$ of the non-zero elements from the centered data using  
our \math{(\ell_1,\ell_2)}-sampling,
as well as uniform sampling. 
The performance for small \math{r} is shown in
Table~\ref{fig:stock_obj_ratio}, including the running time \math{\tau}.

For this data, 
\math{f(\mathcal{G}_{\text{max}, r})/f(\mathcal{G}_{\text{sp}, r})\approx 0.96} ($r=10$).
We observe a very significant performance
 difference between the \math{(\ell_1,\ell_2)}-sketch and uniform sketch.
A more extensive comparison of recovered variance is given in
Figure~\ref{fig:ErrorAll}(c). 
We also observe a speed-up of a factor of about
4 for the \math{(\ell_1,\ell_2)}-sketch.
Similar to TechTC data this dataset is also ``spikey'',  and consequently biased sampling  
toward larger elements significantly outperforms the uniform-sketch.

We now look at the actual sparse top component with 
sparsity parameter \math{r=10}. 
The top-10 features (stocks) in the full PCA on the complete data are shown in
Table \ref{fig:stock_labels}.
In Table~\ref{fig:stock_labels_ranked} we show which stocks appear in the 
top sparse principal component using various
sparse PCA algorithms.
We observe that the sparse PCA from the \math{(\ell_1,\ell_2)}-sketch 
with only 2\% of the non-zero elements sampled 
matches quite closely with the same sparse PCA algorithm using the complete
data ($\mathcal{G}_{\text{max/sp}, r}$ matches $\mathcal{H}_{\text{max/sp}, r}$).

\begin{table}[!h]
\begin{center}
\begin{tabular}{c|cc|cc}
$r$ & 
$\displaystyle\frac{f(\mathcal{H}_{\text{max/\bf sp}, r})}{f(\mathcal{G}_{\text{max/\bf sp}, r})}$&
\math{\displaystyle\frac{\tau(\cl{G})}{\tau(\cl{H})}}&
$\displaystyle\frac{f(\mathcal{U}_{\text{max/\bf sp}, r})}{f(\mathcal{G}_{\text{max/\bf sp}, r})}$&   
\math{\displaystyle\frac{\tau(\cl{G})}{\tau(\cl{U})}}\\[2pt]
&&&&\\[-9pt]\hline &&&&\\[-9pt]
20& 1.00/\bf 1.00&3.85&0.69/\bf 0.67&4.74\\
40& 1.00/\bf 1.00&3.72&0.66/\bf 0.66&4.76\\
60& 0.99/\bf 0.99&3.86&0.65/\bf 0.66&4.61\\
80& 0.99/\bf 0.99&3.71&0.65/\bf 0.66&4.74\\
100& 0.99/\bf 0.99&3.63&0.64/\bf 0.65&4.71\\
   \end{tabular}
\caption{
[Stock data] Comparison of sparse principal components from the 
\math{(\ell_1,\ell_2)}-sketch and uniform sketch.
}
\label{fig:stock_obj_ratio}
\end{center}
\end{table}

\begin{table}[!h]
\begin{center}
\scalebox{1}{
    \begin{tabular}{| c | l || c | l |}
   \hline
\bf ID &\bf  Top 10 in $\mathcal{G}_{\text{max}, r}$& ID & Other stocks\\ \hline
\bf1 & \bf T.2 & 11 & HET. \\ \hline
\bf2 & \bf AIG & 12 & ONE.1 \\ \hline
\bf3 & \bf C & 13 & MA \\ \hline
\bf4 & \bf UIS & 14 & XOM \\ \hline
\bf5 & \bf NRTLQ & 15 & PHA.1 \\ \hline
\bf6 & \bf S.1 & 16 & CL \\ \hline
\bf7 & \bf GOOG & 17 & WY \\ \hline
\bf8 & \bf MTLQQ &  &  \\ \hline
\bf9 & \bf ROK &  & \\ \hline
\bf10 &\bf EK &  &  \\ \hline
   \end{tabular}
}
\caption
{
[Stock data] Top ten stocks in top principal component of the complete data
(the other stocks are discovered by some of the
sparse PCA algorithms).
}
\label{fig:stock_labels}
\end{center}
\end{table}
\begin{table}[!h]
\begin{center}
\scalebox{1}{
    \begin{tabular}{| c | c | c | c | c | c |}
   \hline
$\mathcal{G}_{\text{max}, r}$
& $\mathcal{H}_{\text{max}, r}$ 
& $\mathcal{U}_{\text{max}, r}$ 
& $\mathcal{G}_{\text{sp}, r}$ 
& $\mathcal{H}_{\text{sp}, r}$  
& $\mathcal{U}_{\text{sp}, r}$  \\ 
\hline \hline
1 & 1 & 2 & 1 & 1 & 2 \\ \hline
2 & 2 & 11 & 2 & 2 & 11 \\ \hline
3 & 3 & 12 & 3 & 3 & 12 \\ \hline
4 & 4 & 13 & 4 & 4 & 13 \\ \hline
5 & 5 & 14 & 5 & 5 & 14 \\ \hline
6 & 6 & 3 & 6 & 7 & 3 \\ \hline
7 & 7 & 15 & 7 & 6 & 15 \\ \hline
8 & 9 & 9 & 8 & 8 & 9 \\ \hline
9 & 8 & 16 & 9 & 9 & 16 \\ \hline
10 & 11 & 17 & 10 & 11 & 17 \\ \hline
   \end{tabular}
}
\caption
{
[Stock data]  Relative ordering of the stocks (w.r.t. $\mathcal{G}_{\text{max}, r}$) 
in the top sparse principal component with sparsity parameter
 $r=10$.
}
\label{fig:stock_labels_ranked}
\end{center}
\end{table}

\paragraph{Gene Expression Data.}
Algorithm~\ref{alg:alp*} gives optimal mixing parameter
$\alpha^* = 0.92$. We sample about 
$9\%$ of the elements from the centered data using  
our \math{(\ell_1,\ell_2)}-sampling,
as well as uniform sampling. 
The performance for small \math{r} is shown in
Table~\ref{fig:gene_obj_ratio}, including the running time \math{\tau}.

For this data, 
\math{f(\mathcal{G}_{\text{max}, r})/f(\mathcal{G}_{\text{sp}, r})\approx 0.05} ($r=10$) 
which means a good sparse PCA algorithm is imperative.
We observe a very significant performance
 difference between the \math{(\ell_1,\ell_2)}-sketch and uniform sketch.
A more extensive comparison of recovered variance is given in
Figure~\ref{fig:ErrorAll}(d). 
We also observe a speed-up of a factor of about
4 for the \math{(\ell_1,\ell_2)}-sketch.
Similar to TechTC data this dataset is also ``spikey'',  and consequently biased sampling  
toward larger elements significantly outperforms the uniform-sketch.

Also, we look at the actual sparse top component with 
sparsity parameter \math{r=10}. 
The top-10 features (probes) in the full PCA on the complete data are shown in
Table \ref{fig:probe_labels}.

In Table~\ref{fig:probe_labels_ranked} we show which probes appear in the 
top sparse principal component with sparsity \math{r=10} using various
sparse PCA algorithms.
We observe that the sparse PCA from the \math{(\ell_1,\ell_2)}-sketch 
with only 9\% of the elements sampled 
matches reasonably with the same sparse PCA algorithm using the complete
data ($\mathcal{G}_{\text{max/sp}, r}$ matches $\mathcal{H}_{\text{max/sp}, r}$). 

\begin{table}[t]
\begin{center}
\begin{tabular}{c|cc|cc}
$r$ & 
$\displaystyle\frac{f(\mathcal{H}_{\text{max/\bf sp}, r})}{f(\mathcal{G}_{\text{max/\bf sp}, r})}$&
\math{\displaystyle\frac{\tau(\cl{G})}{\tau(\cl{H})}}&
$\displaystyle\frac{f(\mathcal{U}_{\text{max/\bf sp}, r})}{f(\mathcal{G}_{\text{max/\bf sp}, r})}$&   
\math{\displaystyle\frac{\tau(\cl{G})}{\tau(\cl{U})}}\\[2pt]
&&&&\\[-9pt]\hline &&&&\\[-9pt]
20& 0.82/\bf 0.81&3.76&0.64/\bf 0.16&2.57\\
40& 0.82/\bf 0.88&3.61&0.65/\bf 0.15&2.53\\
60& 0.83/\bf 0.90&3.86&0.67/\bf 0.10&2.85\\
80& 0.84/\bf 0.94&3.71&0.68/\bf 0.11&2.85\\
100& 0.84/\bf 0.91&3.78&0.67/\bf 0.10&2.82\\
   \end{tabular}
\caption{
[Gene data] 
Comparison of sparse principal components from the 
\math{(\ell_1,\ell_2)}-sketch and uniform sketch.
}
\label{fig:gene_obj_ratio}
\end{center}
\end{table}

\begin{table}[t]
\begin{center}
\scalebox{1}{
    \begin{tabular}{| c | l || c | l |}
   \hline
\bf ID &\bf  Top 10 in $\mathcal{G}_{\text{max}, r}$ & ID & Other probes\\ \hline
\bf 1 & \bf 210081\_at & 11 & 205866\_at \\ \hline
\bf 2 & \bf 214387\_x\_at & 12 & 209074\_s\_at \\ \hline
\bf 3 & \bf 211735\_x\_at & 13 & 205311\_at \\ \hline
\bf 4 & \bf 209875\_s\_at & 14 & 216379\_x\_at \\ \hline
\bf 5 & \bf 205982\_x\_at & 15 & 203571\_s\_at \\ \hline
\bf 6 & \bf 215454\_x\_at & 16 & 205174\_s\_at \\ \hline
\bf 7 & \bf 209613\_s\_at & 17 & 204846\_at \\ \hline
\bf 8 & \bf 210096\_at & 18 & 209116\_x\_at \\ \hline
\bf 9 & \bf 204712\_at & 19 & 202834\_at \\ \hline
\bf 10 &\bf  203980\_at & 20 & 209425\_at \\ \hline
&& 21 & 215356\_at\\ \hline
&& 22 & 221805\_at\\ \hline
&& 23 & 209942\_x\_at\\ \hline
&& 24 & 218450\_at \\ \hline
&& 25 & 202508\_s\_at \\ \hline
   \end{tabular}
}
\caption
{
[Gene data] 
Top ten probes in top principal component of the complete data
(the other probes are discovered by some of the
sparse PCA algorithms).
}
\label{fig:probe_labels}
\end{center}
\end{table}

Finally, we validate the genes corresponding to the top probes in the context of lung cancer. Table \ref{fig:gene_labels_ranked} lists the top 
twelve gene symbols in Table \ref{fig:probe_labels}. Note that a gene can occure
multiple times in principal component since genes can 
be associated with different
probes.

\begin{table}[!h]
\begin{center}
\scalebox{1}{
    \begin{tabular}{| c | c | c | c | c | c |}
   \hline
$\mathcal{G}_{\text{max}, r}$
& $\mathcal{H}_{\text{max}, r}$ 
& $\mathcal{U}_{\text{max}, r}$ 
& $\mathcal{G}_{\text{sp}, r}$ 
& $\mathcal{H}_{\text{sp}, r}$  
& $\mathcal{U}_{\text{sp}, r}$  \\ 
\hline \hline
1  & 4   & 13  & 1	& 4 	& 13 \\ \hline
2  & 1   & 14  & 2	& 1 	& 16 \\ \hline
3  & 11 & 3    & 3 	& 2 	& 15 \\ \hline
4  & 2   & 15  & 4 	& 11 	& 19 \\ \hline
5  & 3   & 5  	& 5 	& 3 	& 20 \\ \hline
6  & 8   & 16  & 6 	& 8 	& 21 \\ \hline
7  & 7   & 6  	& 7 	& 7 	& 22 \\ \hline
8  & 9   & 17  & 8 	& 9 	& 23 \\ \hline
9  & 5   & 4    & 9 	& 5 	& 24 \\ \hline
10 & 12 & 18 & 10 	& 12 	& 25 \\ \hline
   \end{tabular}
}
\caption
{
[Gene data] Relative ordering of the probes (w.r.t. $\mathcal{G}_{\text{max}, r}$)
in the top sparse principal component with sparsity parameter
 $r=10$.
}
\label{fig:probe_labels_ranked}
\end{center}
\end{table}

\begin{table}[!h]
\begin{center}
\scalebox{1}{
    \begin{tabular}{| l | l | l | l | l | l |}
   \hline
$\mathcal{G}_{\text{max}, r}$
& $\nu$ 
& $\mathcal{H}_{\text{max}, r}$ 
& $\nu$
& $\mathcal{H}_{\text{sp}, r}$  
& $\nu$ \\
\hline \hline
SFTPC		& 4 	& SFTPC 	& 3 &SFTPC	& 3 \\ \hline
AGER 		& 1 	& SPP1 	& 1 &SPP1	& 1  \\ \hline
SPP 1 		& 1 	& AGER 	& 1 &AGER	& 1  \\ \hline
ADH1B 	& 1 	& FCN3 	& 1 & FCN3 	& 1 \\ \hline
CYP4B1 	& 1 	& CYP4B1 	& 1 &CYP4B1	& 1 \\ \hline
WIF1 		& 1 	& ADH1B 	& 1 &ADH1B	& 1 \\ \hline
FABP4 	& 1 	& WIF1 	& 1 &WIF1	& 1 \\ \hline
		&	& FAM107A 	& 1 &FAM107A& 1 \\ \hline
   \end{tabular}
}
\caption
{
[Gene data] Gene symbols corresponding to top probes in Table \ref{fig:probe_labels_ranked}. One gene can be associated with multiple probes. Here $\nu$ is the frequency of occurrence of a gene in top ten probes of their respective principal component.
}
\label{fig:gene_labels_ranked}
\end{center}
\end{table}

Genes like SFTPC, AGER, WIF1, and FABP4 are down-regulated in lung cancer, 
while SPP1 is up-regulated (see the functional gene grouping at: \url{www.sabiosciences.com/rt_pcr_product/HTML/PAHS-134Z.html}).
Co-expression analysis on the set of eight genes for $\mathcal{H}_{\text{max},r}$ and $\mathcal{H}_{\text{sp},r}$ using the tool ToppFun (\url{toppgene.cchmc.org/}) shows that all eight genes appear in a list of selected probes characterizing non-small-cell lung carcinoma (NSCLC) in \cite[Table S1]{Hou10}. Further, 
AGER and FAM107A appear in the \textit{top five} highly discriminative genes in \cite[Table S3]{Hou10}.
Additionally, AGER, FCN3, SPP1, and ADH1B appear among the 162 most differentiating genes across two subtypes of NSCLC and normal lung cancer in 
\cite[Supplemental Table 1]{Dracheva07}.
Such findings show that our method can identify, from incomplete
data, important genes for complex diseases like cancer. Also, notice that our sampling-based method is able to identify additional important genes, such as, FCN3 and FAM107A in top ten genes.

\subsection{Performance of Other Sketches}

We briefly report on 
other options for sketching \math{\matA}. 
First, we consider suboptimal \math{\alpha} (not \math{\alpha^*} from 
Algorithm~\ref{alg:alp*}) in (\ref{eqn:hybrid_prob}) to construct a suboptimal hybrid distribution. 
We use this distribution in proto-Algorithm \ref{alg:proto_alg1} to construct a sparse sketch. 
Figure~\ref{fig:ErrorOther_subalpha} reveals that a good sketch using the
optimal \math{\alpha^*} is important.

\begin{figure}[!th]
\begin{center}
\includegraphics[scale=0.4]{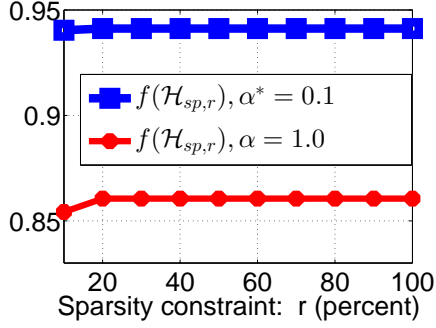}
\end{center}
\caption{[Stock data] Performance of sketch using \textit{suboptimal} $\alpha$ to illustrate 
the importance of the optimal mixing parameter 
\math{\alpha^*}.
\label{fig:ErrorOther_subalpha}}
\end{figure}

Second, another popular sketching method using element wise 
sparsification is to sample elements not biasing toward larger elements but 
rather toward elements whose leverage scores are high.
See \cite{BCSW14} for the detailed form of the leverage score sampling probabilities
(which are known to work well in other settings
can be plugged into our proto-Algorithm~\ref{alg:proto_alg1}). 
Let $\matA$ be a $m \times n$ matrix of rank $\rho$, and its SVD is given by $\matA=\matU\matSig\matV^T$. 
Then, we define $\mu_i$ (row leverage scores), $\nu_j$ (column leverage scores), and element-wise leverage scores
$p_{lev}$ as follows:
\begin{eqnarray*}
&& \mu_i = \TsNorm{\matU_{(i)}}^2, \quad  \nu_j = \TsNorm{\matV_{(j)}}^2,\\
&&p_{lev} = \frac{1}{2}\cdot \frac{\mu_i+\nu_j}{(m+n)\rho} + \frac{1}{2mn}, \quad i\in[m], j \in [n]
\end{eqnarray*}

At a high level, the leverage score of element \math{(i,j)} is proportional
to the squared norms of the \math{i}th row of the left singular matrix and the
\math{j}th row of the right singular matrix.
Such leverage score sampling is different from uniform 
sampling only for \textit{low rank} matrices or 
\textit{low rank approximations} to matrices, so we used a low
rank approximation to the data matrix. 
We construct such low-rank approximation by projecting a dataset onto a low dimensional subspace.
We notice that the datasets projected onto the space spanned by top few principal components 
preserve the linear structure of the data. For example, Digit data show good separation 
of digits when projected onto the top three PCA's. For TechTC and Gene data the top two respective 
PCA's are good enough to form a low-dimensional subspace where the datasets show reasonable separation
of two classes of samples. For the stock data we use top three PCA's because the stable rank is 
close to 2. 

Let $\mathcal{L}_{\text{sp},r}$ be the \math{r}-sparse components using \textit{Spasm} for 
the leverage score sampled sketch \math{\tilde\matA}. 
Figure \ref{fig:ErrorOther} shows that leverage score sampling is 
not as effective as the optimal hybrid  
\math{(\ell_1,\ell_2)}-sampling 
for sparse PCA of low-rank data.
 
\begin{figure}[t]
\begin{center}
\begin{tabular}{cccc}
\resizebox{0.23\textwidth}{!}{\includegraphics{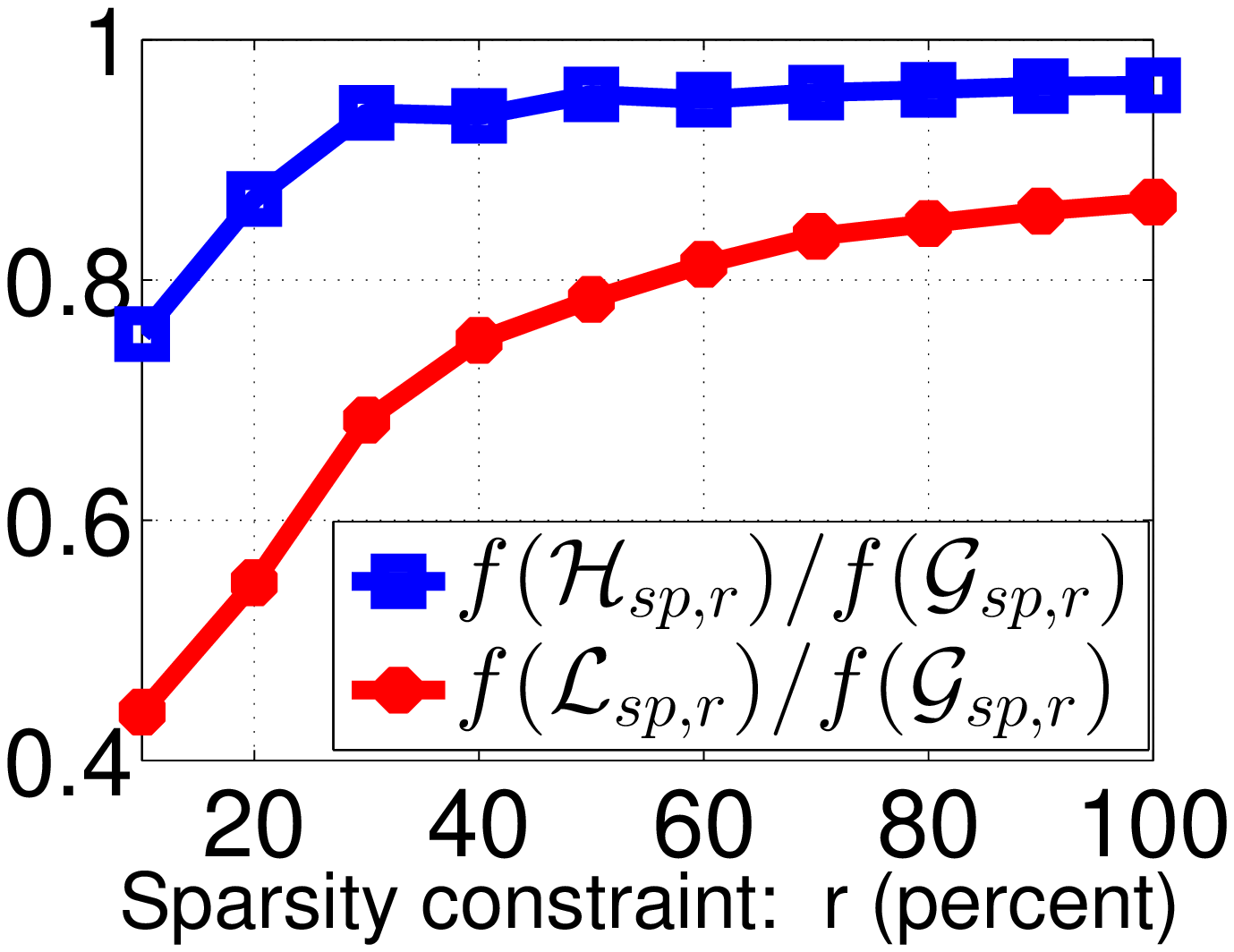}}&
\resizebox{0.23\textwidth}{!}{\includegraphics{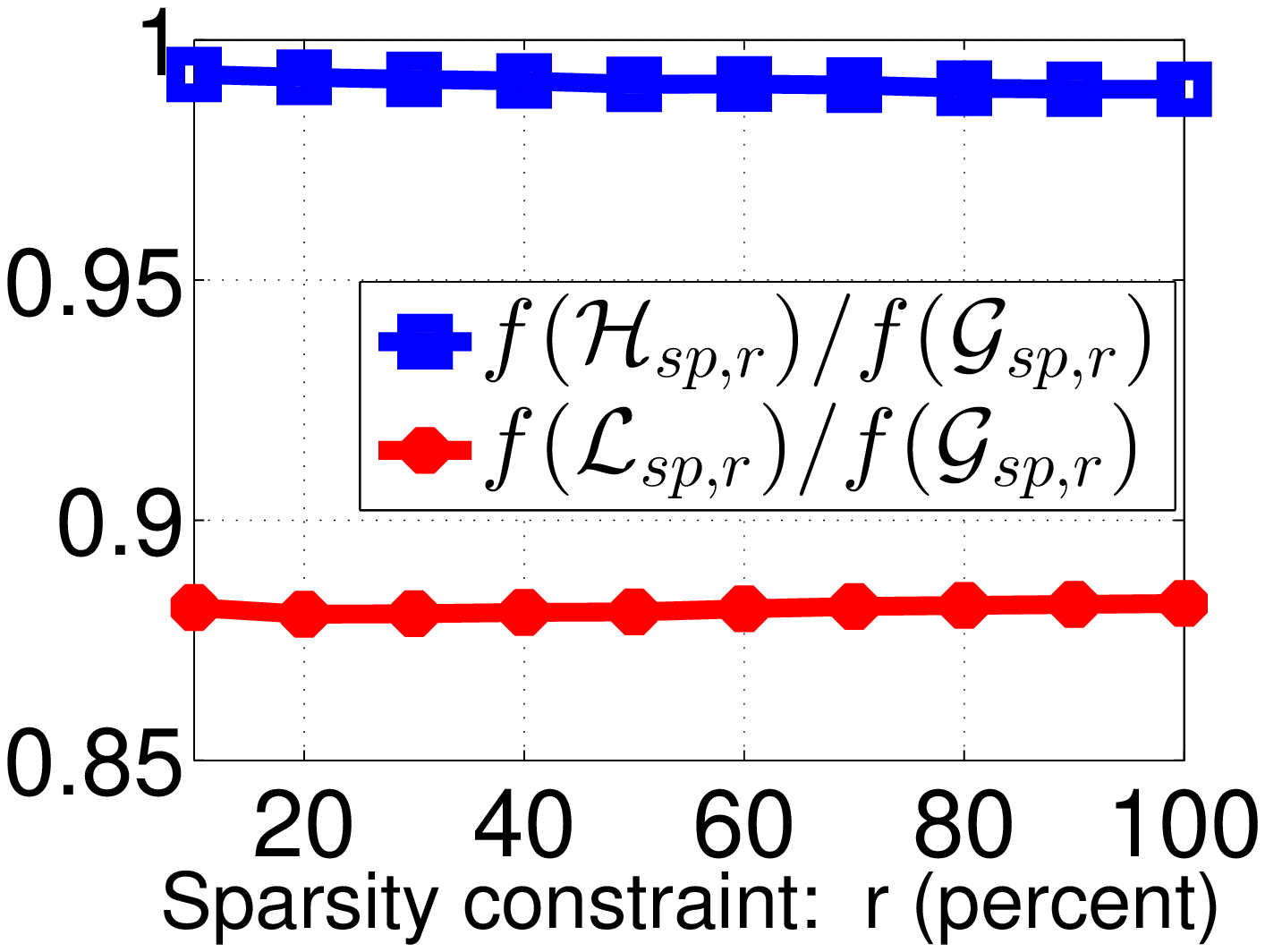}}&
\resizebox{0.23\textwidth}{!}{\includegraphics{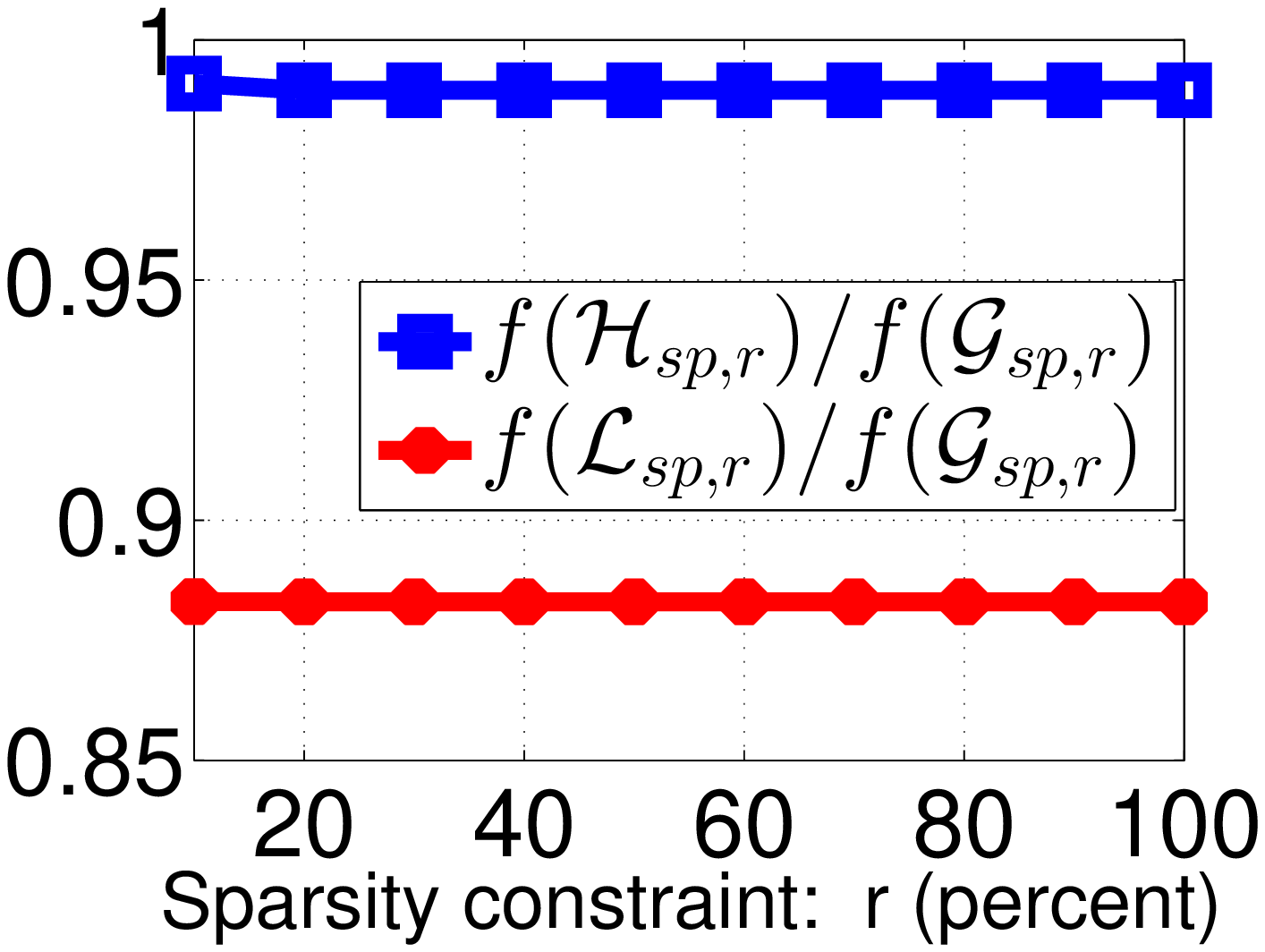}}&
\resizebox{0.23\textwidth}{!}{\includegraphics{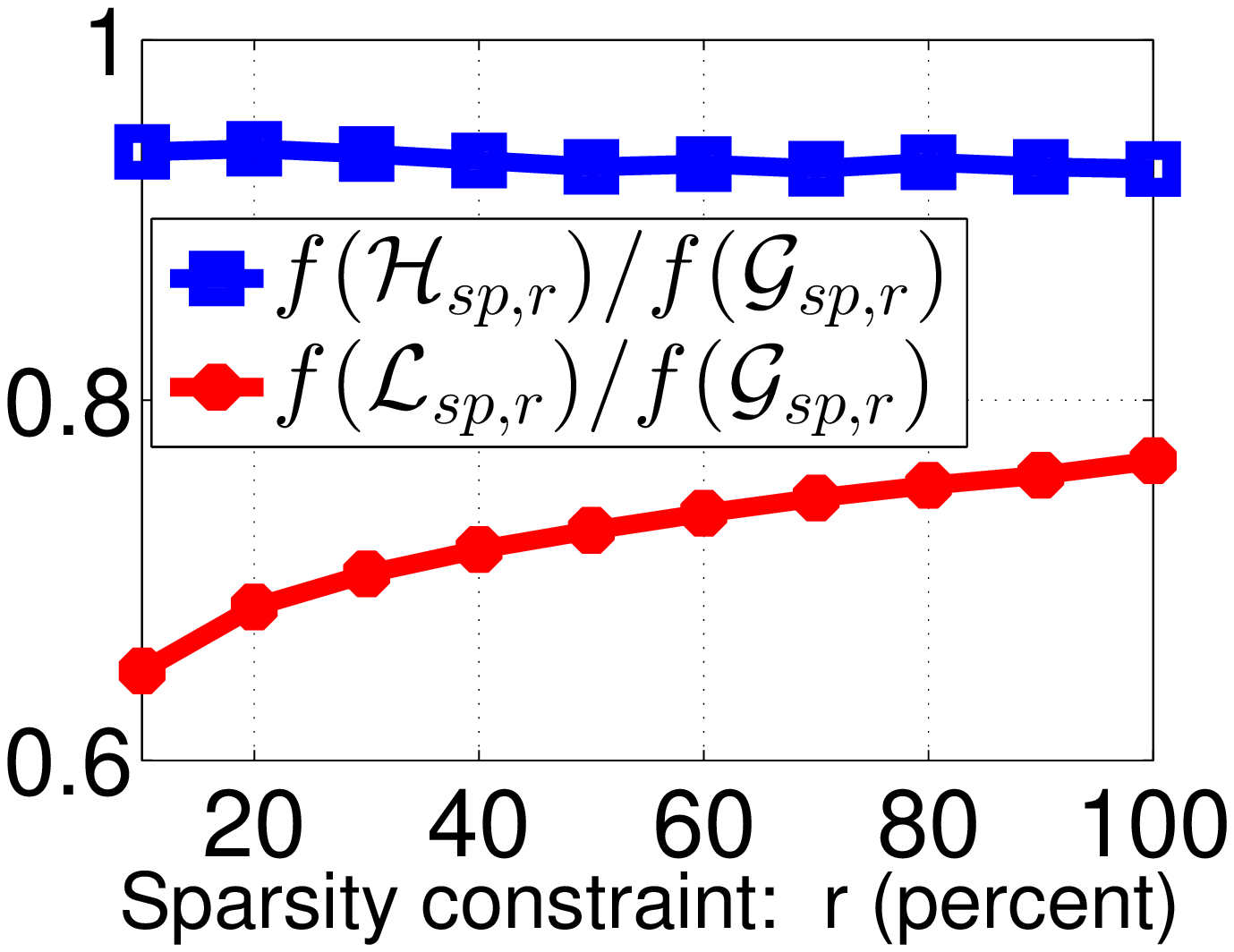}}
\\
(a) Digit (rank 3) & (b) TechTC (rank 2) & (c) Stock (rank 3) & (d) Gene (rank 2)
\end{tabular}
\end{center}
\caption{[Low-rank data] Performance of sparse PCA of low-rank data for optimal \math{(\ell_1,\ell_2)}-sketch and
leverage score sketch over an extensive range for the sparsity constraint \math{r}.
The performance of the optimal hyrbid sketch is considerably better
highlighting the importance of a good sketch.
\label{fig:ErrorOther}}
\end{figure}

\paragraph{Conclusion.}
It is possible to use a sparse sketch (incomplete data)
to recover nearly as good sparse 
principal components as you would have gotten with the complete data.
We mention that,
while \math{\cl{G}_{\text{max}}} which uses the largest weights
in the unconstrained PCA does not perform well with respect to the variance,
it does identify good features. A simple enhancement to
 \math{\cl{G}_{\text{max}}} is to recalibrate the sparse
component after identifying
the features - this is an unconstrained PCA problem on just the columns of
the data matrix corresponding to the features. This method of recalibrating
can be used to improve any sparse PCA algorithm.

Our algorithms are simple and efficient, and many interesting
avenues for further research remain.
Can the sampling complexity for the top-\math{k} sparse PCA be reduced from
\math{O(k^2)} to \math{O(k)}. We suspect that this should be possible
by getting a better bound on 
\math{\sum_{i=1}^k\sigma_i(\matA\transp\matA-\tilde\matA\transp\tilde\matA)};
we used the crude 
bound \math{k\norm{\matA\transp\matA-\tilde\matA\transp\tilde\matA}_2}.
We also presented a general surrogate optimization bound
which may be of interest in other applications. In particular,
it is pointed out in \cite{MB2015} that though PCA optimizes variance,
a more natural way to look at PCA is as the linear projection of the 
data that minimizes the \emph{information loss}. 
\cite{MB2015} gives efficient algorithms to find sparse 
linear dimension reduction that minimizes information loss -- the
information loss of sparse PCA can be considerably higher than optimal.
To minimize information loss, the objective to maximize is 
\math{f(\matV)=\trace(\matA\transp\matA\matV(\matA\matV)^\dagger\matA)}.
It would be interesting to see whether one can recover 
sparse low-information-loss linear projectors from incomplete data.

\clearpage
\bibliography{sparsification,spca} \bibliographystyle{plainnat}

\begin{thebibliography}{18}
\providecommand{\natexlab}[1]{#1}
\providecommand{\url}[1]{\texttt{#1}}
\expandafter\ifx\csname urlstyle\endcsname\relax
  \providecommand{\doi}[1]{doi: #1}\else
  \providecommand{\doi}{doi: \begingroup \urlstyle{rm}\Url}\fi

\bibitem[Asteris et~al.(2014)Asteris, Papailiopoulos, and Dimakis]{APD2014}
M.~Asteris, D.~Papailiopoulos, and A.~Dimakis.
\newblock Non-negative sparse {PCA} with provable guarantees.
\newblock In \emph{Proc. ICML}, 2014.

\bibitem[Cadima and Jolliffe(1995)]{CJ95}
J.~Cadima and I.~Jolliffe.
\newblock Loadings and correlations in the interpretation of principal
  components.
\newblock \emph{Applied Statistics}, 22:\penalty0 203--214, 1995.

\bibitem[Chen et~al.(2014)Chen, Bhojanapalli, Sanghavi, and Ward]{BCSW14}
Y~Chen, S~Bhojanapalli, S~Sanghavi, and R~Ward.
\newblock {Coherent Matrix Completion}.
\newblock \emph{Proceedings of International Conference on Machine Learning},
  pages 674--682, 2014.

\bibitem[d'Aspremont et~al.(2007)d'Aspremont, El~Ghaoui, Jordan, and
  Lanckriet]{AEJL07}
Alexandre d'Aspremont, Laurent El~Ghaoui, Michael~I. Jordan, and Gert R.~G.
  Lanckriet.
\newblock A direct formulation for sparse {PCA} using semidefinite programming.
\newblock \emph{SIAM Review}, 49\penalty0 (3):\penalty0 434--448, 2007.

\bibitem[d'Aspremont et~al.(2008)d'Aspremont, Bach, and Ghaoui]{ABE08}
Alexandre d'Aspremont, Francis Bach, and Laurent~El Ghaoui.
\newblock Optimal solutions for sparse principal component analysis.
\newblock \emph{Journal of Machine Learning Research}, 9:\penalty0 1269--1294,
  June 2008.

\bibitem[Dracheva et~al.(2007)Dracheva, Philip, Xiao, Gee, and
  et~al]{Dracheva07}
T.~Dracheva, R.~Philip, W.~Xiao, AG~Gee, and et~al.
\newblock {Distinguishing Lung Tumours From Normal Lung Based on a Small Set of
  Genes}.
\newblock In \emph{Lung Cancer}, pages 157–--164, 55(2), 2007.

\bibitem[Gabrilovich and Markovitch(2004)]{GM04}
E.~Gabrilovich and S.~Markovitch.
\newblock {Text categorization with many redundant features: using aggressive
  feature selection to make SVMs competitive with C4.5}.
\newblock In \emph{Proceedings of International Conference on Machine
  Learning}, 2004.

\bibitem[Hou et~al.(2010)Hou, Aerts, den Hamer, and et~al]{Hou10}
J.~Hou, J.~Aerts, B.~den Hamer, and et~al.
\newblock {Gene Expression-Based Classification of Non-Small Cell Lung
  Carcinomas and Survival Prediction}.
\newblock In \emph{PLoS One}, page 5(4):e10312, 2010.

\bibitem[Hull(1994)]{Hull94}
J.~J. Hull.
\newblock {A database for handwritten text recognition research}.
\newblock In \emph{IEEE Transactions on Pattern Analysis and Machine
  Intelligence}, pages 550--554, 16(5), 1994.

\bibitem[Kundu et~al.(2015)Kundu, Drineas, and Magdon-Ismail]{KDM15}
A.~Kundu, P.~Drineas, and M.~Magdon-Ismail.
\newblock {Recovering PCA from Hybrid-$(\ell_1,\ell_2)$ Sparse Sampling of Data
  Elements}.
\newblock In \emph{\url{http://arxiv.org/pdf/1503.00547v1.pdf}}, 2015.

\bibitem[Magdon-Ismail(2015)]{M2015}
M.~Magdon-Ismail.
\newblock {NP}-hardness and inapproximability of sparse pca.
\newblock arxiv report: http://arxiv.org/abs/1502.05675, 2015.

\bibitem[Magdon-Ismail and Boutsidis(2015)]{MB2015}
M.~Magdon-Ismail and C.~Boutsidis.
\newblock arxiv report: http://arxiv.org/abs/1502.06626, 2015.

\bibitem[Moghaddam et~al.(2006)Moghaddam, Weiss, and Avidan]{MWA06a}
B.~Moghaddam, Y.~Weiss, and S.~Avidan.
\newblock Generalized spectral bounds for sparse {LDA}.
\newblock In \emph{Proc. ICML}, 2006.

\bibitem[Pearson(1901)]{P1901}
K.~Pearson.
\newblock On lines and planes of closest fit to systems of points in space.
\newblock \emph{Philosophical Magazine}, 2:\penalty0 559--572, 1901.

\bibitem[Shen and Huang(2008)]{SH08}
Haipeng Shen and Jianhua~Z. Huang.
\newblock Sparse principal component analysis via regularized low rank matrix
  approximation.
\newblock \emph{Journal of Multivariate Analysis}, 99:\penalty0 1015--1034,
  July 2008.

\bibitem[Sjöstrand et~al.(2012)Sjöstrand, Clemmensen, Larsen, and
  Ersbøll]{KCLE12}
K.~Sjöstrand, L.H. Clemmensen, R.~Larsen, and B.~Ersbøll.
\newblock {Spasm: A matlab toolbox for sparse statistical modeling}.
\newblock In \emph{Journal of Statistical Software (Accepted for publication)},
  2012.

\bibitem[Trendafilov et~al.(2003)Trendafilov, Jolliffe, and Uddin]{TJU03}
N.~Trendafilov, I.~T. Jolliffe, and M.~Uddin.
\newblock A modified principal component technique based on the lasso.
\newblock \emph{Journal of Computational and Graphical Statistics},
  12:\penalty0 531--547, 2003.

\bibitem[Zou et~al.(2006)Zou, Hastie, and Tibshirani]{ZHT06}
H.~Zou, T.~Hastie, and R.~Tibshirani.
\newblock Sparse principal component analysis.
\newblock \emph{Journal of Computational \& Graphical Statistics}, 15\penalty0
  (2):\penalty0 265--286, 2006.

\end{thebibliography}

\end{document}